\newcommand{\model}{\textsc{HRLFS}}
\newtheorem{assumption}{Assumption}
\begin{document}

\title[Comprehend, Divide, and Conquer]{Comprehend, Divide, and Conquer: Feature Subspace Exploration via Multi-Agent Hierarchical Reinforcement Learning}


\author{Weiliang Zhang}
\affiliation{%
  \institution{Computer Network Information Center, the Chinese Academy of Sciences}
  \city{Beijing}
  \country{China}
  }
\affiliation{%
  \institution{University of the Chinese Academy of Sciences}
  \city{Beijing}
  \country{China}
}
\email{wlzhang@cnic.cn}

\author{Xiaohan Huang}
\affiliation{%
  \institution{Computer Network Information Center, the Chinese Academy of Sciences}
  \city{Beijing}
  \country{China}
  }
\affiliation{%
  \institution{University of the Chinese Academy of Sciences}
  \city{Beijing}
  \country{China}
}
\email{xhhuang@cnic.cn}

\author{Ziyue Qiao}
\affiliation{%
  \institution{School of Computing and Information Technology, Great Bay University}
  \city{Dongguan}
  \country{China}
}
\email{zyqiao@gbu.edu.cn}

\author{Qingqing Long}
\affiliation{%
  \institution{Computer Network Information Center, Chinese Academy of Sciences}
  \city{Beijing}
  \country{China}
}
\affiliation{%
  \institution{University of the Chinese Academy of Sciences}
  \city{Beijing}
  \country{China}
}
\email{qqlong@cnic.cn}

\author{Zhen Meng}
\affiliation{%
  \institution{Computer Network Information Center, Chinese Academy of Sciences}
  \city{Beijing}
  \country{China}
}
\email{zhenm99@cnic.cn}

\author{Yuanchun Zhou}
\affiliation{%
  \institution{Computer Network Information Center, Chinese Academy of Sciences}
  \city{Beijing}
  \country{China}
  }
\affiliation{%
    \institution{University of the Chinese Academy of Sciences}
  \city{Beijing}
  \country{China}
}
\email{zyc@cnic.cn}

\author{Yi Du}
\affiliation{%
  \institution{Computer Network Information Center, Chinese Academy of Sciences}
  \city{Beijing}
  \country{China}
}
\email{duyi@cnic.cn}

\author{Meng Xiao}
\authornote{Corresponding author: Meng Xiao (shaow@cnic.cn).}
\affiliation{%
  \institution{Computer Network Information Center, Chinese Academy of Sciences}
  \city{Beijing}
  \country{China}
}
\affiliation{%
  \institution{Duke-NUS Medical School, National University of Singapore}
  \city{Singapore}
  \country{Singapore}
}
\email{shaow@cnic.cn}
\renewcommand{\shortauthors}{Zhang et al.}


\begin{CCSXML}
<ccs2012>
   <concept>
       <concept_id>10002951.10003317.10003318.10003321</concept_id>
       <concept_desc>Information systems~Content analysis and feature selection</concept_desc>
       <concept_significance>500</concept_significance>
       </concept>
   <concept>
       <concept_id>10003752.10010070.10010071.10010082</concept_id>
       <concept_desc>Theory of computation~Multi-agent learning</concept_desc>
       <concept_significance>500</concept_significance>
       </concept>
 </ccs2012>
\end{CCSXML}

\ccsdesc[500]{Theory of computation~Multi-agent learning}
\ccsdesc[500]{Information systems~Content analysis and feature selection}

\begin{abstract}
Feature selection aims to preprocess the target dataset, find an optimal and most streamlined feature subset, and enhance the downstream machine learning task. 
Among filter, wrapper, and embedded-based approaches, the reinforcement learning (RL)-based subspace exploration strategy provides a novel objective optimization-directed perspective and promising performance. 
Nevertheless, even with improved performance, current reinforcement learning approaches face challenges similar to conventional methods when dealing with complex datasets. 
These challenges stem from the inefficient paradigm of using one agent per feature and the inherent complexities present in the datasets.
This observation motivates us to investigate and address the above issue and propose a novel approach, namely \model{}. 
Our methodology initially employs a Large Language Model (LLM)-based hybrid state extractor to capture each feature's mathematical and semantic characteristics. 
Based on this information, features are clustered, facilitating the construction of hierarchical agents for each cluster and sub-cluster. 
Extensive experiments demonstrate the efficiency, scalability, and robustness of our approach. 
Compared to contemporary or the one-feature-one-agent RL-based approaches, \model{} improves the downstream ML performance with iterative feature subspace exploration while accelerating total run time by reducing the number of agents involved. \footnote{Our code and dataset are publicly accessible toward ~\href{https://github.com/coco11563/HARLFS}{Github} and ~\href{https://huggingface.co/datasets/Shaow/Feature_Selection_Dataset}{Huggingface}.}
\end{abstract}

\begin{CCSXML}
<ccs2012>
   <concept>
       <concept_id>10010147.10010257.10010321.10010336</concept_id>
       <concept_desc>Computing methodologies~Feature selection</concept_desc>
       <concept_significance>500</concept_significance>
       </concept>
 </ccs2012>
\end{CCSXML}

\ccsdesc[500]{Computing methodologies~Feature selection}

\keywords{Automated Feature Engineering, Tabular Dataset, Data Reprogramming.}
  
\maketitle

\section{Introduction}

Feature selection (FS) plays a critical role in classical machine learning models by mitigating the curse of dimensionality, reducing training times, and alleviating feature redundancy, thereby substantially enhancing predictive performance~\cite{li2017feature, wang2025towards, li2017recent}. 
The evolution of FS methodologies has progressed from early heuristic and statistical screening techniques to the development of sophisticated approaches that incorporate filter-based~\cite{biesiada2008feature,yu2003feature,song2021fast}, wrapper-based~\cite{stein2005decision,gheyas2010feature}, and embedded strategies~\cite{dinh2020consistent,li2016deep,lemhadri2021lassonet} tailored to manage increasingly complex data environments. Concurrently, the rapid advancement of modern artificial intelligence~\cite{qin2025scihorizon,huang2026scihorizon,liugut, xiao2026bioharness} has broadened the application domain of FS to encompass challenging tasks such as data pattern discovery~\cite{xiao2023beyond}, biomarker identification~\cite{11164312,ying2024revolutionizing}, and the construction of AI-driven data pipelines (AI4data)~\cite{liang2022advances,cai2025knowledge,huang2025collaborative,xiao2025m}. 
In the context of emerging demands in multi-modal data processing and real-time big data analytics, integrating efficient and intelligent feature selection is increasingly recognized as a pivotal element in enhancing data quality and advancing data-centric paradigms~\cite{fu2024tabular}.

\begin{figure}[!h]
    \centering
\includegraphics[width=0.65\linewidth]{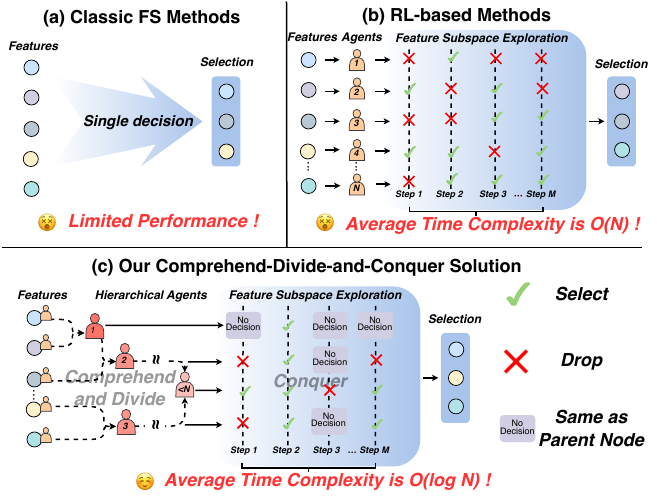}
    \caption{\textcolor{black}{Comparison between \model\ with other feature selection approaches. }}
    \label{fig_introduction}
\end{figure}

Among the various feature selection methodologies, reinforcement learning (RL)-based strategies~\cite{sarlfs} have received significant attention due to their ability to optimize feature subsets in an objective-directed manner with iterative feature subspace exploration. 
Despite these advances, RL-based feature selection methods encounter notable challenges when handling complex datasets. 
This difficulty primarily stems from the limited capability of a single-agent-all-feature framework~\cite{sarlfs}. 
Current RL-based research has made some progress in addressing the complexity of datasets. 
An insightful approach~\cite{marlfs} involves adopting a multi-agent reinforcement learning~\cite{Zhang2021} (MARL) architecture for feature sub-space exploration. 
However, the limitation of the one-agent-per-feature system becomes evident, as it necessitates an excessive number of agents when processing high-dimensional datasets. 

In summary, as depicted in Figure~\ref{fig_introduction}(a), classic FS methods perform feature selection through a single decision, which is efficient but analyzes the nature of the features inadequately, resulting in lower performance. 
Present RL-based Methods as depicted in Figure~\ref{fig_introduction}(b). 
Those methods of subspace exploration come from wrapper feature selection ideas and have powerful feature selection capabilities. 
Still, they also have the inherent drawbacks of wrapper methods, i.e., they have $\mathcal{O}(N)$ time complexity and require a lot of time when dealing with large datasets. 
To mitigate the challenge of managing an overwhelming number of agents, group-based~\cite{fan2021autogfs} and interaction-wise~\cite{fan2020autofs} agent architectures have been proposed as promising solutions.
Nevertheless, these approaches rely solely on superficial mathematical characteristics (e.g., standard variance of features) to organize features, resulting in inaccuracies due to neglecting semantic and contextual feature relationships~\cite{li2024exploring,moraffah2024causal}. 

Those observations motivate us to develop novel hierarchical reinforcement learning-based architectures for feature sub-space exploration.
The core concept of this research is that when a feature (e.g., age) is irrelevant to a task, its semantically equivalent feature (e.g., birthdate) may also be irrelevant. 
Additionally, two features exhibiting similar numerical patterns (e.g., highly related biological signals) could be redundant, necessitating fine-grained differentiation. 
For example, Mice-Protein~\cite{higuera2015self} is utilized in research on mouse Down syndrome and comprises 77 key proteins linked to learning ability in the mouse brain. 
It contains two genes, 'pGSK3B\_N' and 'GSK3B\_N', that are highly correlated biologically, because they correspond to the same protein, 'GSK-3$\beta$', which represents different forms of the protein during its expression~\cite{higuera2015self}.
Conventional feature selection methods struggle with such intrinsic biological redundancy. 
A one-agent-one-feature approach, for example, makes independent decisions and often fails to remove both features due to a lack of agent collaboration. 
Meanwhile, group-based and interaction-wise methods based on purely mathematical correlations can detect their statistical link but lack the semantic context to understand the underlying biological reason.
Our approach uses semantic information to identify latent relationships and employs a hierarchical structure for unified decision-making. This addresses the limitations of the aforementioned methods in terms of feature understanding, agent collaboration, and decision-making efficiency.
%

As depicted in Figure~\ref{fig_introduction}(c), our solution builds hierarchical agents to make decisions on feature subspace exploration tasks, reducing the average decision time complexity to $\mathcal{O}(log\ N)$ while maintaining high performance. 
The core idea can be divided into three stages: 
(\textbf{\underline{Comprehend}}) Our primary innovation lies in integrating Large Language Models to comprehend the semantic meanings of feature metadata, coupled with Gaussian Mixture Models (GMM) to capture the mathematical characteristics of the features.
(\textbf{\underline{Divide}}) Building upon these enhanced feature representations, we employ a clustering mechanism that groups similar features.
(\textbf{\underline{Conquer}})  Utilizing these clusters, we construct a multi-agent hierarchical reinforcement learning framework that mirrors the natural organization of the feature space. 
This hierarchical structure strategically divided decision-making responsibilities to cluster-specific agents and adaptively reduced the required activated agents, significantly reducing computational overhead and accelerating the exploration process.
Our contributions can be summarized as:

\begin{itemize}
    \item \textbf{Comprehensive Feature Understanding.} We harness Large Language Models and Statistical Differential to extract meaningful information from dataset metadata and their mathematical characteristics, thereby enabling a deeper understanding of feature relationships beyond numerical statistics. 
    \item \textbf{Beyond One-agent-one-feature Architecture.} The clustered features are managed through a hierarchical multi-agent reinforcement learning architecture, which reduces the total number of activated agents, enhances computational efficiency, and improves the feature selection process. 
    
    \item \textbf{Theoretical and Empirical Efficiency.} We rigorously demonstrate the efficiency and effectiveness of our method from both theoretical and experimental perspectives, showing its superior predictive performance and computational scalability compared to existing feature selection approaches.
\end{itemize}

\section{Important Definitions}
\noindent\textbf{Feature Selection.} Given the dataset $D = \{X\in \mathcal{R}^{m\times n},y\in \mathcal{R}^{m\times 1}, F\}$, where $X$ and $y$ are the features and labels, respectively. 
$m$ and $n$ is the number of samples and features.
We use a finite set $F=\{f_1,f_2,...,f_n\}$ to indicate the feature column included in dataset $D$, where $f_i$ is the $i-th$ column of $X$. 
The goal of feature selection is to find the optimal feature subset $F^*\subset F$  to enhance model performance while maintaining computational efficiency.

\smallskip
\noindent\textbf{Combine Feature Selection with RL.} 
In this paper, we frame the decision process of the feature selection method as a Markov Decision Process (MDP)~\cite{feinberg2012handbook}. 
Specifically, $s_t$ represents the state of the selected feature $F_t$ at time step $t$. 
The agent(s), with policy function(s), $\pi(\cdot)$, will select or drop each feature with an action $a_t\in \{0,1\}^n$, where each action component corresponds to including (1) or excluding (0) a specific feature. 
With the action $a_t$, we could build a subset of the feature $F_{t+1}$ and extract its related new state $s_{t+1}$. 
We can also evaluate the selection and obtain the reward $r_t$. 
With the collection of memory $m_t = (s_t, a_t, s_{t+1}, r_t)$, we could optimize the policy function(s) $\pi(\cdot)$ toward reinforcement learning and finally obtain the optimal selection $F^*$.

\section{Methodology}

\subsection{Overview of \model}
\model\ is a highly effective RL-based feature subspace exploration method that adopts a comprehend-divide-and-conquer paradigm. 
\model\ will first extract the hybrid state of each feature, cluster each feature, and then initialize the hierarchical agent architecture. 
By using the state of each feature, the hierarchical agents will explore the feature combination and optimize its selection policy. 
After that, \model\ will output the final optimal selection to enhance the downstream machine-learning task.

\subsection{Hybrid Feature State Extraction}\label{sec:feature_embedding}
As illustrated in Figure~\ref{fig_1}, we develop a hybrid-faceted feature state extraction method to help both the clustering component and each RL agent comprehend the given dataset. 
\begin{figure}[!h]
    \centering
\includegraphics[width=0.55\linewidth]{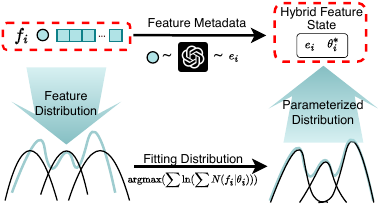}
    \caption{Hybrid feature state extraction.}
    \label{fig_1}
\end{figure}

\smallskip
\noindent\textbf{Leveraging Feature Distribution.}
We first consider the mathematical characteristics of each feature. 
To achieve that, we model each feature $f$ using a Gaussian Mixture Model with $k$ Gaussian component to accurately capture the distribution of the feature values. 
The probability density function (pdf) of the feature $f$ is defined as:
\begin{equation}
    P(f| \theta) = \sum^{k}_{i = 1} z_i N_i\left(f \mid \mu_i, \sigma_i\right) \text{, where} \sum^{k}_{i = 1} z_i = 1.
\end{equation}
Here, $N_i(f \mid \mu_i,\sigma_i)$ denote the $i$-th Gaussian distribution, $z_i$ is a non-negative constant that controls the weight of each component. $\theta=(z_1,...,z_k,\mu_1,...,\mu_k,\sigma_1,...,\sigma_k)$ are the overall parameters of the GMM model. 
We then use the following log-likelihood function to maximize the likelihood between PDF to real distribution of $F$, given as:
\begin{equation}
    l(\Theta \mid F) = 
    \sum_{f\in F} \ln\left(\sum^{k}_{i = 1} z_{i} N\left(f \mid \mu_{i}, \sigma_{i}\right)\right),
\end{equation}
where $\Theta=\{\theta_i\}_{i=1}^n$ denotes all the parameters for $n$ features. 
By that, we apply the expectation maximization (EM) algorithm to optimize the loss function and obtain the final parameter $\theta^*_i$ as the distribution state of the feature $f_i$.

\noindent\textbf{Leveraging Feature Metadata.} 
Another key source to address feature state inaccuracy is to obtain semantic information from feature descriptions (i.e., metadata). 
We employ a straightforward but efficient approach: input the feature name-description pair into a large language model and subsequently utilize PCA~\cite{dunteman1989principal} for dimensionality reduction to {align with the Gaussian component number $k$}.
This process results in sentence embeddings that serve as the semantic feature state.
For incomplete datasets, we use the following prompt to generate its feature description. 
As shown in the figure~\ref{llm_prompt}, we feed the description of the dataset, the available feature names and their descriptions, and the names of the features without descriptions to the large language model to complete the missing feature description.
\begin{figure*}[!h]
\centering 
\includegraphics[width=0.85\textwidth]{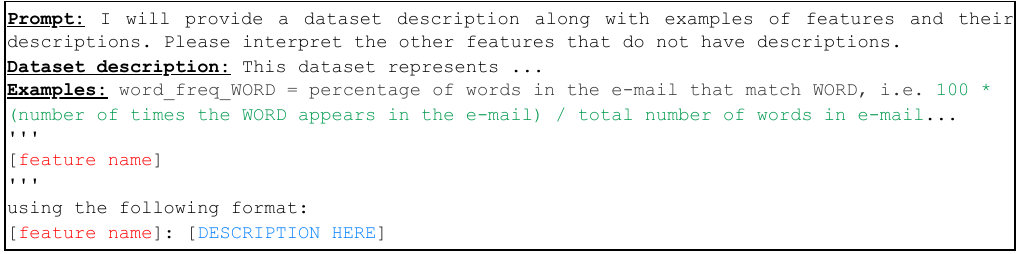}
\caption{The prompt to generate feature description using dataset metadata.}
\label{llm_prompt}
\end{figure*}
{Generated metadata are created once during preprocessing,
stored as immutable annotations.} 
Further, we use {an all-zero embedding} as its semantic feature state for the dataset with both no dataset description and feature description (e.g., a synthetic dataset). 
By that, we can obtain the semantic state {$e_i \in \mathbf{R}^{k}$} for each feature $f_i$.


Finally, we concatenate the distribution and semantic states to generate the hybrid state, given as $h_i = e_i \oplus \theta_i^*$.

\subsection{Divide-and-Conquer Agent Architecture} \label{sec:hierarchical_agent}

We then introduce how we build the agent hierarchy (divide) and organize the agent decision (conquer) to overcome the challenge of the dataset complexity.

\begin{figure}[!h]
    \centering
\includegraphics[width=0.55\linewidth]{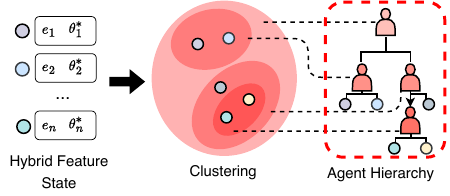}
    \caption{Construction of agent hierarchy by incremental clustering.}
    \label{fig_3}
\end{figure}

\subsubsection{Agent Hierarchy Construction.}\label{sec:hierarchical_agent_construction}

\textcolor{black}{As illustrated in Figure~\ref{fig_3}, we propose H-clustering algorithm that aggregates the features by the similarity of their hybrid state and uses this group information to build the hierarchy of the agents.}
Specifically, we employ an incremental clustering algorithm to merge two clusters that cause the smallest increase in variance~\cite{ward1963hierarchical}. 
Algorithm~\ref{algorithm:feature_aggregation} illustrates that starting with all feature states $H = \{h_i\}_{i=1}^n$, where every feature initially forms a cluster, and new clusters are sequentially merged through similarity grouping until just one remaining cluster is achieved. 
We then derive the cluster set $\mathcal{C}_{\text{agent}}$ where each cluster $C_i \in \mathcal{C}_{\text{agnet}}$ signifies an Agent $\mathcal{A}_i$. 
Inclusion relationships establish the agents' hierarchy; a lower-level agent $\mathcal{A}_i$ is part of a higher-level agent $\mathcal{A}_j$ if $C_i\subset C_j$  (with two lower-level agents combining to form a higher-level agent).

\renewcommand{\algorithmicrequire}{\textbf{Input:}}
\renewcommand{\algorithmicensure}{\textbf{Output:}}
\begin{algorithm}
    \caption{H-clustering: Agent Hierarchy Construction}\label{algorithm:feature_aggregation}
    \begin{algorithmic}[1]
        \REQUIRE{Feature number $n$, feature embeddings $H=\{h_1,...,h_n\}$}
        \ENSURE{A set of clusters $\mathcal{C}_{\text{agent}}$}
        \STATE {\color{gray}\# Initialize each feature as a cluster.}
        \STATE $C_i\gets\{f_i\}$
        \STATE $\mathcal{C}\gets\ \{C_1,...,C_n\}$
        \STATE $\mathcal{C}_{\text{agent}}\gets\ \{\}$
        \WHILE{ $|\mathcal{C}| > 1$} 
        \STATE {\color{gray}\# Calculate the distance between each two clusters by Ward's method.}
        \STATE $d_{ij}=\frac{|C_i| |C_j|}{|C_i| + |C_j|} \| \overline{h_{ci}} - \overline{h_{cj}} \|^2$, where $|\cdot|$ represents the number of items 
        
        and $\overline{h_{ci}},\overline{h_{cj}}$ is the mean vector of each feature's state within $C_i$ and $C_j$. 

        \STATE {\color{gray}\# Find the two most similar clusters. Then aggregate them}
        {\color{gray} into new cluster.}
        \STATE $C_{\text{new}}\gets C_a \cup C_b$
        \STATE {\color{gray}\# add $C_{new}$ to $\mathcal{C}$ and $\mathcal{C}_{\text{agent}}$.}
        \STATE $\mathcal{C} \gets\mathcal{C} \setminus C_a$
        \STATE $\mathcal{C} \gets\mathcal{C} \setminus C_b $
        \STATE $\mathcal{C} \gets\mathcal{C} \cup \{C_{\text{new}}\}$
        \STATE $\mathcal{C}_{\text{agent}} \gets\mathcal{C}_{\text{agent}} \cup \{C_{\text{new}}\}$
        \ENDWHILE
    \end{algorithmic}
\end{algorithm}   

\smallskip
\noindent\textbf{Hierarchical Agent for Cluster-specific Decision.}
As shown in Figure \ref{main_fig_2}, agents work together to make decisions across various granular levels within a hierarchical framework, extending from higher to lower-level agents. 
Each agent shares states and rewards while developing policies. 
The leaf nodes of the tree correspond to individual features and are responsible for making fine-grained decisions on whether to select or discard each specific feature. In contrast, the internal (non-leaf) nodes represent higher-level feature clusters and make broader decisions by partitioning the feature space, determining whether entire groups of features—represented by their respective subtrees—should be further considered or pruned from the selection process.

\smallskip
\noindent$\diamondsuit$ \textit{\uline{Action:}}
At the $t$-th iteration, the action $a^i_t$ associated with the $i$-th agent (corresponding to a node in the tree) is a binary choice: $a^i_t \in {\text{select}, \text{drop}}$.
If an agent (tree node) chooses the 'select' action, it recursively delegates the feature selection task to its child nodes (i.e., the subtrees rooted at its children are further explored). Conversely, if the 'drop' action is selected by a node, the entire subtree rooted at this node becomes inactive, and none of its descendant agents (or their associated feature clusters) are activated. In this way, a 'drop' action at any internal node results in a pruning of the corresponding subtree from the selection process. 

\smallskip
\noindent$\diamondsuit$ \textit{\uline{State}}
\label{section:agent_state} 
The state for each agent is a vectorized representation derived from the selected feature subset. 
We adopt the hybrid feature state as the state representation for each agent. 
We first acquire the parameter $\theta^*_i = \{z^{i}_1,\cdots,z^{i}_k,\mu^{i}_1,\cdots,\mu^{i}_k,\sigma^{i}_1,\\\cdots,\sigma^{i}_k\}$ from GMM for the feature $f_i$. 
Due to the $z$ adjustment of the weight of each Gaussian component, we weighted-sum each $\mu$ and $\sigma$ toward the corresponding $z$ to form a state vector $s_i$, formally:
\begin{equation}
    s_i =    e_i 
\oplus \sum_{j=1}^k z_j^i * (\mu_j^i \oplus \sigma_j^i),
\end{equation}
To obtain the state of step $t$, we concatenate all feature states to form a fixed-size vector {$\mathbf{s}_t \in \mathbf{R}^{n\times (k + 2)}$}:
\begin{equation}
    \mathbf{s}_t = \bigoplus_{i=1}^n\left(\mathbbm{1}_t(f_i) * s_i\right)
\end{equation}
where $\mathbbm{1}_t(\cdot)$ is an indicator function denoting that the feature $f_i$ is selected or not in step $t$. 
If $f_i$ is not chosen, the function yields 0; if selected, it yields 1.

\smallskip
\noindent$\diamondsuit$ \textit{\uline{Policy}}
The agent‘s policy network is a feed-forward neural network with a binary classification head. Formally, for feature $f_i$, its action in $t$-th iteration
is then derived by $a^i_{t}=\pi_i(\mathbf{s}_{t-1})$.

\smallskip
\noindent$\diamondsuit$ \textit{\uline{Reward}}
\label{section:reward}
As illustrated in Figure~\ref{main_fig_2}, we design the reward function regarding downstream task performance and quantity suppression of the selected feature numbers.

$\star$ \textit{Performance}: the first aspect of the reward function evaluates performance through downstream tasks, such as classification and regression. 
We train a downstream task model using a selected feature subset and define $r^p_t$ using the model evaluation metrics.

$\star$ \textit{Quantity Suppression}: The second perspective focuses on ensuring a compact number of features through:
\begin{equation} \label{equation:reward_lambda}
    r^q_t=\frac{|F|-|F_t|}{|F|+\lambda \cdot |F_t|}
\end{equation}
where $F$ is the entire feature set, $F_t$ is the selected feature subset in step-$t$, $\lambda$ is a hyperparameter, and $|\cdot|$ denoted the size of given set.
As $\lambda$ increases, the penalty for keeping too many features (large $|\mathcal{F}_t|$) becomes more severe, thus encouraging more substantial gene reduction. 
Conversely, a lower value of $\lambda$ relaxes the penalty against the size of $|\mathcal{F}_t|$, suitable when minimal reduction is sufficient. 
This metric ensures that the selection process strategically reduces the number of features.

$\star$ \textit{Reward Assignment}: Then we combine two perspectives and
obtain the reward in step-$t$:
\begin{equation} \label{equation:reward_alpha}
    r_t=\alpha \cdot r^p_t + (1-\alpha) \cdot r^q_t,
\end{equation}
where $r_t$ is the total reward in this step. 
$\alpha$ is a hyperparameter to adjust the weight of two perspectives. 
After obtaining the reward, the framework will assign the reward equally to each activated agent. 
This approach of balancing performance and feature number in the reward function is similar to sparse constraint-based feature selection methods (Lasso~\cite{tibshirani1996regression} and LassoNet~\cite{lemhadri2021lassonet}), which incorporate regularization terms into their objective functions, but is more flexible and adaptable.


\subsection{Exploration and Opitmization} \label{sec:hierarchical_reinforced_iteration}
Figure~\ref{main_fig_2} demonstrates our approach of using hierarchical reinforced iteration to discover the best subsets of characteristics, partitioning model training into two distinct phases: exploration and optimization.
\begin{figure}
    \centering
\includegraphics[width=0.55\linewidth]{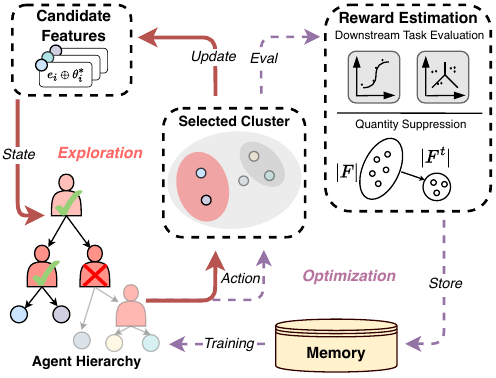}
    \caption{Detail of the iteration and optimization with the hierarchical agents.}
    \label{main_fig_2}
\end{figure}

\smallskip
\noindent\textbf{Exploration Phase.}
In the exploration phase, hierarchical agents randomly explore the feature subspace. 
They take the current state $s_t$ as input and select or drop features hierarchically, constructing a new set of selected features.
The reward $r_t$  and the next state $s_{t+1}$ are computed from the newly selected features subset. 
Each agent $\mathcal{A}_i$ taking action $a_t$ in this step stores the experience $m_t=\{\mathbf{s}_t,a_t,r_t,\mathbf{s}_{t+1}\}$ in its memory $M_i$. 

\smallskip
\noindent\textbf{Optimization Phase.}
Agents will autonomously train their individual policies through the memory mini-batches obtained from prioritized experience replay~\cite{schaul2015prioritized}.
We refined the policy utilizing the Actor-Critic approach~\cite{haarnoja2018soft}, where the policy network $\pi(\cdot)$ assumes the role of the actor, while $V(\cdot)$ serves as its associated critic. The optimization objective for agent $\mathcal{A}_i$ is defined by the expected cumulative reward, expressed as:
\begin{equation}
    \max_{\pi} \mathbb{E}_{m_t \sim \mathcal{B}} \left[ \sum_{t=0}^{T} \gamma_t r_t \right],
\end{equation}

where $\mathcal{B}$ refers to the distribution of experiences within the prioritized replay buffer, $\gamma$ is the discount factor, and $T$ denotes the time horizon of an episode. 
Additionally, we incorporate the Q-function, $Q(\mathbf{s}, a)$, which signifies the expected return for taking action $a$ in state $s$ and adhering to policy $\pi$ subsequently:
\begin{equation}
Q(\mathbf{s}, a) = \mathbb{E} \left[ r + \gamma \max_{a'} Q(\mathbf{s}', a') \mid \mathbf{s}, a \right].
\label{equation:bellman}
\end{equation}
The training adjustments for the actor networks are determined by the policy gradient~\cite{kakade2001natural} defined as
\begin{equation}
 \quad \nabla_{\theta} J(\pi) = \mathbb{E}_{m_t \sim \mathcal{B}} \left[ \nabla_{\theta} \log \pi(a_t | \mathbf{s}_t) A(\mathbf{s}_t, a_t) \right].
\end{equation}
Here, $A(\mathbf{s}, a) = Q(\mathbf{s}, a) - V(\mathbf{s})$ is the advantage function, which aids in the gradient estimation for policy improvement.

\smallskip
\noindent\textbf{Analysis of the Advancement of HRL.} \label{sec:why_hri_fast}
The hierarchical agent structure activates fewer agents in each step, leading to superior performance in the iteration.
The single-agent approach uses one agent to make $N$ decisions on $N$ features. 
Meanwhile, the multi-agent approach uses $N$ agents to make $N$ decisions on $N$ features, and the time complexity of both approaches is $\mathcal{O}(N)$.
In contrast, we apply a hierarchical structure, which reduces the number of activated agents making decisions, totally  using $2N-1$ agents to make $log(N)$ decisions on $N$ features, with time complexity $\mathcal{O}(log(N))$. 
In the following section, we will provide detailed proof of the time complexity advantage. 

\subsection{The Proof of Time Complexity of Comprehend-Divide-and-Conquer Solution}
\label{section:time_complexity_proof}
In this section, we provide formal proofs of the time complexity of the comprehend-divide-and-conquer feature selection solution, analyzing the best, worst, and average cases. Consider an architecture with $N$ agents organized as a complete binary tree, where each non-leaf agent delegates decision-making to its children with probability $p$.

\begin{theorem}[Best-Case Time Complexity]
The best-case time complexity satisfies $\mathcal{O}(1)$.
\end{theorem}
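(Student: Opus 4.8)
The plan is to identify the shortest possible path of decisions from the root agent to a terminal state and argue that this contributes only a constant number of activated agents, hence constant time. First I would observe that in the hierarchical architecture the decision process always begins at the single root agent of the complete binary tree: this is one activated agent that takes its action $a^{\text{root}}_t \in \{\text{select},\text{drop}\}$. The key structural fact, established in Section~\ref{sec:hierarchical_agent}, is that a \emph{drop} action at any internal node immediately prunes the entire subtree rooted there, so no descendant agents are activated. Therefore, if the root agent selects the \emph{drop} action, the exploration terminates after exactly one agent activation, regardless of $N$. This is the best case, and it gives an activation count of $1$, i.e. $\mathcal{O}(1)$.

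The main steps, in order, would be: (i) formalize the notion of ``time'' as the number of activated agents along the realized decision path (consistent with the complexity discussion in Section~\ref{sec:why_hri_fast}, where $\log N$ decisions correspond to the balanced-path case); (ii) note that the number of activated agents is bounded below by $1$ since the root is always queried; (iii) exhibit the realization in which the root chooses \emph{drop} (which occurs with positive probability $1-p$ under the stated model, so it is a genuine best case, not a vacuous one), and verify that this realization activates exactly one agent and produces a valid terminal feature subset, namely the empty selection; (iv) conclude that the minimum over all realizations of the activation count is $1 = \mathcal{O}(1)$.

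I expect the only real subtlety — not a hard obstacle, but the point needing care — is making precise what ``best-case time complexity'' quantifies over: it is the minimum, over all possible joint action realizations of the agents, of the work done in a single exploration step, and we must be careful that a single step's cost is measured by activated-agent count rather than, say, the cost of the downstream model evaluation for the reward (which is a separate, fixed per-step cost and does not scale with $N$). Once that convention is pinned down, the argument is immediate: the root's \emph{drop} action yields a length-one decision path, so the best case is $\mathcal{O}(1)$. No elaborate calculation is required; the proof is essentially the observation that pruning at the root costs constant work.
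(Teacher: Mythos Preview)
Your proposal is correct and matches the paper's own argument: both identify the best case as the root agent terminating (choosing \emph{drop}) without delegating, so only a constant number of agent activations occurs regardless of $N$. Your write-up is more explicit about formalizing ``time'' as activated-agent count and about the lower bound of one activation, but the core idea is identical to the paper's one-line proof.
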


\begin{proof}
The best-case scenario occurs when the root agent terminates the decision process immediately without delegating to its children. This requires only a constant number of operations at the root node:
\begin{equation}
    T_{\text{best}}(N) = c_1
\end{equation}
where $c_1$ is a constant. Therefore,
\begin{equation}
    T_{\text{best}}(N) \in \mathcal{O}(1).
\end{equation}
\end{proof}

\begin{theorem}[Worst-Case Time Complexity]
The worst-case time complexity satisfies $\mathcal{O}(N)$.
\end{theorem}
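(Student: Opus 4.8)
The plan is to identify the worst-case execution of the hierarchical decision process and count the total number of activated agents. The worst case occurs when every non-leaf agent chooses the \texttt{select} action, so that no subtree is ever pruned and the decision process visits every node of the complete binary tree. Since a complete binary tree with $N$ leaves has $2N-1$ nodes in total, every agent in the architecture is activated, and each agent performs only a constant amount of work (one forward pass of its policy network to produce a binary action). Therefore $T_{\text{worst}}(N) = c_2 \cdot (2N-1)$ for some constant $c_2$, and hence $T_{\text{worst}}(N) \in \mathcal{O}(N)$.

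More carefully, I would set up a recurrence for the cost of processing the subtree rooted at an agent. Let $T(h)$ denote the worst-case cost of the subtree of height $h$. A leaf costs $T(0) = c_1$. An internal node at height $h$ performs constant work $c_0$ to evaluate its policy and decide; in the worst case it delegates to both children, giving $T(h) = c_0 + 2\,T(h-1)$. Unrolling this recurrence over the $\log_2 N$ levels yields $T(\log_2 N) = \Theta(2^{\log_2 N}) = \Theta(N)$, which matches the direct node-counting argument. I would then note that no execution can activate more than all $2N-1$ agents, so this bound is tight and $\mathcal{O}(N)$ is indeed the worst case.

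The step requiring the most care is making precise what ``one decision'' costs and ensuring the constant-work-per-agent assumption is justified, since the state vector $\mathbf{s}_t$ has dimension $n \times (k+2)$ and in principle a policy forward pass scales with this dimension; I would treat $k$ and the network width as fixed constants so that each agent's action computation is $\mathcal{O}(1)$ in $N$, consistent with how the best-case and average-case bounds are stated. The only genuine obstacle is a modeling one rather than a mathematical one: confirming that the worst-case branching pattern (all internal agents selecting \texttt{delegate}) is actually reachable under the policy, which it is, since each agent independently delegates with some probability $p \in (0,1)$ and the event that all do so has positive probability. Everything else is a routine geometric-series calculation.
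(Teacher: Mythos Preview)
Your proposal is correct and follows essentially the same argument as the paper: identify the worst case as every agent delegating so that all nodes of the binary tree are activated, then count those nodes to obtain a linear bound. The paper's own proof is terser---it simply sums the nodes per level, $\sum_{k=0}^{h-1} 2^k = 2^h - 1 = N$, treating $N$ as the total number of agents rather than the number of leaf features---whereas you additionally set up and unroll the recurrence $T(h) = c_0 + 2T(h-1)$ and discuss reachability and the constant-work-per-agent assumption; these extras are sound but not required, and the minor difference in whether $N$ denotes leaves or all nodes does not affect the $\mathcal{O}(N)$ conclusion since $2N-1 = \Theta(N)$.
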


\begin{proof}
The worst-case scenario occurs when every agent recursively delegates decisions to its children until the leaf nodes are reached, thus activating all $N$ nodes in the tree. For a complete binary tree with $N = 2^h - 1$ nodes (height $h$),
\begin{equation}
    T_{\text{worst}}(N) = \sum_{k=0}^{h-1} 2^k = 2^h - 1 = N.
\end{equation}
Therefore,
\begin{equation}
    T_{\text{worst}}(N) \in \mathcal{O}(N).
\end{equation}
\end{proof}

\begin{theorem}[Average-Case Time Complexity]
Under the assumptions below, the average-case time complexity satisfies $\mathcal{O}(\log N)$ when $p = 1/2$.
\end{theorem}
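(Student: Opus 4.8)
The plan is to express the quantity of interest — the expected number of agents activated during one top-down decision pass — as a recurrence over the height of the binary tree, solve it in closed form, and then substitute $p = 1/2$.

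First I would define $M(h)$ to be the expected number of activated nodes in a complete binary tree of height $h$ (i.e.\ $h$ levels, so $N = 2^h - 1$ nodes). The root is always activated, contributing $1$. Under the stated independence and homogeneity assumptions, with probability $p$ the root issues the \emph{select} action and both of its children become the roots of independent instances of the same process on subtrees of height $h-1$, while with probability $1-p$ the root issues \emph{drop} and the pass terminates. Linearity of expectation then gives
\begin{equation}
    M(h) = 1 + 2p \cdot M(h-1), \qquad M(1) = 1.
\end{equation}

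Next I would unroll this linear recurrence into a geometric series,
\begin{equation}
    M(h) = \sum_{k=0}^{h-1} (2p)^k,
\end{equation}
and specialize to $p = 1/2$, for which $2p = 1$ and hence $M(h) = h$. Because a complete binary tree with $N = 2^h - 1$ nodes has height $h = \log_2(N+1)$, this yields $M(h) = \log_2(N+1)$; since each activated agent performs a constant amount of work, the expected running time of one pass is $\Theta(\log N)$, in particular $\mathcal{O}(\log N)$, as claimed.

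I expect the main obstacle to be not the algebra but the justification that the recurrence actually closes: one must argue that, conditioned on the root delegating, the two child subtrees evolve as mutually independent copies of the height-$(h-1)$ process, which is exactly where the independence assumption on the per-node delegation decisions is used; without it the cross terms do not vanish and expectation cannot be propagated level by level. A secondary point worth making explicit is that $p = 1/2$ is the critical value: the same closed form shows $M(h) = \Theta(1)$ when $p < 1/2$ (a convergent geometric series) and $M(h) = \Theta\bigl((2p)^{h}\bigr) = \Theta\bigl(N^{\log_2(2p)}\bigr)$ when $p > 1/2$, so the logarithmic bound holds precisely at the stated value of $p$. I would close by noting that $M(h) = h$ also supplies a matching lower bound, so the average-case complexity is in fact $\Theta(\log N)$, not merely $\mathcal{O}(\log N)$.
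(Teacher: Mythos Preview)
Your proposal is correct and follows essentially the same route as the paper: both set up the recurrence $E = 1 + 2p\,E'$ via the law of total expectation (parametrized by height in your case, by $N$ in the paper), unroll it into the geometric series $\sum_{k=0}^{h-1}(2p)^k$, and then specialize to $p=1/2$ to obtain $h = \log_2(N+1)$ activated agents in expectation. Your additional remarks on the sub- and super-critical regimes $p \lessgtr 1/2$ and the matching $\Theta(\log N)$ lower bound are correct extras that the paper does not include.
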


\begin{assumption}[Balanced Tree Structure]\label{assump:balanced}
The hierarchical architecture is a perfect binary tree with:
\begin{itemize}
    \item $N = 2^h - 1$ nodes, where $h$ is the tree height;
    \item All leaf nodes at the same depth level;
    \item Each non-leaf node has exactly two subtrees of size $\lfloor N/2 \rfloor$.
\end{itemize}
\end{assumption}
While this assumption is theoretically convenient, in practice our hierarchy closely approximates such a structure.
We provide an empirical validation of this assumption in Section~\ref{sec:empirical-hierarchy}, showing that the balance factors and tree heights observed across 21 datasets are strongly aligned with those of a perfect binary tree.

\begin{assumption}[Independent Delegation]\label{assump:delegation}
Each non-leaf agent independently delegates decision-making to its children with probability $p$, where
\begin{equation}
    p \in [0, 1].
\end{equation}
\end{assumption}

Let $E(N)$ denote the expected number of active agents in such a tree. We formalize the analysis as follows:

\begin{lemma}[Expectation Decomposition]
For $N > 1$, the expected number of active agents satisfies
\begin{equation}
    E(N) = (1-p) \cdot 1 + p \cdot \left[1 + E\left(\left\lfloor \frac{N}{2} \right\rfloor\right) + E\left(\left\lceil \frac{N}{2} \right\rceil\right)\right],
\end{equation}
with boundary condition $E(1) = 1$.
\end{lemma}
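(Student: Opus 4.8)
The plan is to derive the recurrence by conditioning on the single random choice made at the root of the tree, then appeal to the structural symmetry guaranteed by Assumption~\ref{assump:balanced} to rewrite the contributions of the two subtrees in terms of $E(\cdot)$ evaluated at the subtree sizes. First I would fix $N > 1$ and observe that the root agent is always active, since the decision process necessarily begins by evaluating it; this accounts for the leading $1$ in both branches of the claimed identity. Next I would split on the Bernoulli event described in Assumption~\ref{assump:delegation}: with probability $1-p$ the root selects the \emph{drop} action, the entire subtree is pruned, no descendants are activated, and the total count of active agents is exactly $1$, contributing $(1-p)\cdot 1$. With the complementary probability $p$, the root delegates to its two children, so the active set is the root together with the active agents in the left subtree and the active agents in the right subtree.

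The key step is then to argue that, conditioned on delegation, the expected number of active agents contributed by each subtree equals $E$ evaluated at that subtree's size. By Assumption~\ref{assump:balanced} each subtree is itself a perfect binary tree (of size $\lfloor N/2 \rfloor$ and $\lceil N/2 \rceil$ respectively), and by Assumption~\ref{assump:delegation} the delegation choices inside the two subtrees are independent of each other and of the root's choice, so the induced process on each subtree is an identical copy of the original process at the smaller size. Linearity of expectation lets me add the root's deterministic contribution of $1$ to the two subtree expectations, yielding $p\cdot\bigl[1 + E(\lfloor N/2\rfloor) + E(\lceil N/2\rceil)\bigr]$. Combining the two branches gives the stated recurrence, and the boundary condition $E(1)=1$ is immediate because a leaf node has no children to delegate to and is simply active whenever it is reached, which at the root-is-a-leaf base case it always is.

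The main obstacle I anticipate is being precise about what ``active'' means for agents sitting \emph{below} a node that chose to delegate but whose own ancestors' choices have already been absorbed into the conditioning; one must be careful that the event ``this subtree is entered'' is exactly the event ``the root delegated,'' with no further conditioning leaking in, so that the recursive appeal to $E$ is clean. A secondary subtlety is the floor/ceiling bookkeeping: for $N = 2^h - 1$ the two children subtrees each have exactly $2^{h-1}-1$ nodes, so $\lfloor N/2\rfloor = \lceil N/2\rceil = 2^{h-1}-1$ and the asymmetric-looking $\lfloor\cdot\rfloor,\lceil\cdot\rceil$ notation is really just a uniform way of writing the balanced split; I would note this so the later solution of the recurrence (setting $p=1/2$ and unrolling to get $\mathcal{O}(\log N)$) goes through without friction. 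No heavy computation is needed here — the lemma is purely a first-step-analysis / law-of-total-expectation argument, and the real work is deferred to solving the recurrence in the proof of the average-case theorem.
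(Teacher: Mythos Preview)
Your proposal is correct and follows essentially the same approach as the paper: both condition on the root's Bernoulli choice via the law of total expectation, identify the two cases (terminate versus delegate), and use the balanced-tree assumption plus independence to express the delegated case as $1 + E(\lfloor N/2\rfloor) + E(\lceil N/2\rceil)$. Your write-up is in fact more careful than the paper's about the independence and floor/ceiling subtleties, but the argument is the same first-step analysis.
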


\begin{proof}
By the law of total expectation:
\begin{align}
    E(N) 
    &= \mathbb{E}[\text{Nodes} \mid \text{root terminates}] \cdot (1-p) \\
    &\quad + \mathbb{E}[\text{Nodes} \mid \text{root delegates}] \cdot p.
\end{align}
Specifically:
\begin{itemize}
    \item If the root terminates: $\mathbb{E}[\text{Nodes}] = 1$;
    \item If the root delegates: $\mathbb{E}[\text{Nodes}] = 1 + E\left(\left\lfloor \frac{N}{2} \right\rfloor\right) + E\left(\left\lceil \frac{N}{2} \right\rceil\right)$.
\end{itemize}
For a perfect binary tree, $\left\lfloor \frac{N}{2} \right\rfloor = \left\lceil \frac{N}{2} \right\rceil = \frac{N}{2}$ when $N = 2^h - 1$, so the recurrence simplifies to
\begin{equation}
    E(N) = 1 + 2p E(N/2).
\end{equation}
\end{proof}

\begin{theorem}[Closed-Form Solution]
For $N = 2^h - 1$, the recurrence resolves to
\begin{equation}
    E(N) = \frac{(2p)^{\log_2(N+1)} - 1}{2p - 1}.
\end{equation}
In particular, when $p = 1/2$,
\begin{equation}
    E(N) = 2\log_2(N+1) + \mathcal{O}(1).
\end{equation}
\end{theorem}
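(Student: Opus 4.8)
The plan is to collapse the recurrence from the Expectation Decomposition Lemma into a one-dimensional linear recurrence indexed by the tree height, solve that in closed form using a finite geometric series, and then re-express the answer in terms of $N$. First I would fix the correspondence $N = 2^h - 1 \iff h = \log_2(N+1)$ and observe that each of the two subtrees hanging off the root of a perfect binary tree of height $h$ is again a perfect binary tree, of height $h-1$ and with $2^{h-1}-1$ nodes. Writing $E_h := E(2^h - 1)$, the Lemma's recurrence $E(N) = (1-p)\cdot 1 + p\,[\,1 + 2E(N/2)\,]$ therefore specializes to the clean first-order linear recurrence $E_h = 1 + 2p\,E_{h-1}$ with base case $E_1 = 1$.

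Next I would solve this recurrence. Unrolling it (equivalently, splitting it into the homogeneous part $C(2p)^h$ and the constant particular solution $\tfrac{1}{1-2p}$) gives $E_h = \sum_{i=0}^{h-1}(2p)^i$, and when $2p \neq 1$ the geometric-sum formula yields $E_h = \frac{(2p)^h - 1}{2p-1}$. I would make this airtight with a one-line induction on $h$: the base case is $\frac{2p-1}{2p-1}=1$, and the inductive step is the identity $1 + 2p\cdot\frac{(2p)^{h-1}-1}{2p-1} = \frac{(2p)^h-1}{2p-1}$. Substituting $h = \log_2(N+1)$ back in then reproduces exactly the displayed closed form $E(N) = \frac{(2p)^{\log_2(N+1)} - 1}{2p-1}$.

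For the special value $p = 1/2$ the closed form is an indeterminate $0/0$, so I would treat it as a removable singularity and evaluate directly: with $2p = 1$ the finite sum degenerates to $\sum_{i=0}^{h-1} 1 = h$, so $E(N) = h = \log_2(N+1)$ (equivalently, take the limit $2p \to 1$ by L'Hôpital). Since $h$ grows logarithmically in $N$, this gives $E(N) \in \Theta(\log N)$, which is the $\mathcal{O}(\log N)$ bound asserted by the Average-Case Time Complexity theorem, and in particular is within the stated $2\log_2(N+1) + \mathcal{O}(1)$.

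The only genuine obstacle is the reduction in the first step: one must argue that the expected count $E$ depends only on the number of nodes of a subtree (equivalently, under Assumption~\ref{assump:balanced}, only on its height), so that the two recursive terms truly coincide and the floor/ceiling bookkeeping of the general Lemma collapses. This follows from Assumption~\ref{assump:delegation} (delegation events are i.i.d.\ across agents) together with Assumption~\ref{assump:balanced} (the hierarchy is a perfect binary tree), but it deserves to be spelled out rather than taken for granted. Once that is in place, the rest is a routine linear-recurrence computation, with the $p = 1/2$ case being the only point that cannot be obtained by naive substitution into the closed form.
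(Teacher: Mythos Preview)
Your approach is exactly the paper's: reindex the recurrence by tree height as $E_h = 1 + 2p\,E_{h-1}$, unroll it into the geometric sum $\sum_{i=0}^{h-1}(2p)^i$, and substitute $h = \log_2(N+1)$. Your execution is in fact tidier than the paper's own proof---the paper's unrolling overshoots by one step and writes $\tfrac{(2p)^{h+1}-1}{2p-1}$ before silently reverting to the theorem's $\tfrac{(2p)^{h}-1}{2p-1}$, and at $p=\tfrac12$ it asserts $2\log_2(N+1)+\mathcal{O}(1)$ where your direct evaluation correctly gives $\log_2(N+1)$; either way the $\Theta(\log N)$ bound needed downstream holds, and your explicit appeal to Assumptions~\ref{assump:balanced} and~\ref{assump:delegation} to justify collapsing the two subtree terms is a point the paper simply takes for granted.
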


\begin{proof}
Unrolling the recurrence for $N = 2^h - 1$:
\begin{equation}
\begin{split}
E(N) &= 1 + 2pE(N/2) \\
&= 1 + 2p[1 + 2pE(N/4)] \\
&= \sum_{k=0}^{h-1} (2p)^k + (2p)^h E(1) \\
&= \frac{(2p)^h - 1}{2p - 1} + (2p)^h \cdot 1 \\
&= \frac{(2p)^{h+1} - 1}{2p - 1} \\
\end{split}
\end{equation}
Substituting $h = \log_2(N+1)$ yields: 
\begin{equation}
    E(N) = \frac{(2p)^{\log_2(N+1)} - 1}{2p - 1}.
\end{equation}
For $p = 1/2$, $2p = 1$, so: 
\begin{equation}
    E(N) = \log_2(N+1) + 1 = 2\log_2(N+1) + \mathcal{O}(1).
\end{equation}
\end{proof}

Therefore, the average-case time complexity satisfies: 
\begin{equation}
    T_{\text{avg}}(N) \in \mathcal{O}(\log N).
\end{equation}

In Section~\ref{complex_analysis}, we conducted experiments and the results show that the active agents decreased by 70.61\% to 82.30\%, and time consumption reduced by 35.49\% to 55.26\%. These observations are aligned with the conclusion.

\section{Experimental Settings}
\subsection{Dataset Description}
\label{data_des}
We conducted experiments on 21 datasets, which are available for public use from the Feature Selection Benchmark~\cite{feature_selection_benchmark},the National Center for Biotechnology Information (NCBI) ’s
Gene Expression Omnibus (GEO)~\cite{ncbi},UCI~\cite{uci}, Kaggle~\cite{kaggle}, OpenML~\cite{openml}, libSVM~\cite{libsvm}, etc. 
These datasets vary among three tasks (classification, multi-label classification, and regression), sample sizes, and number of feature samples. 
The datasets come from various fields, including biology, finance, and synthetic data.
The datasets contain different sample sizes from 253 to 83773.
These datasets include a wide range of feature number, from 21 to 20670.
To avoid overfitting, we followed existing research~\cite{xiao2023beyond} for splitting the datasets into training and validation folds. 
{
Specifically, reward is evaluated in 80\% of samples; the remaining 20\% of samples are used to validate the efficiency and capabilities of the selected feature.}
Dataset descriptions, including the number of samples, features, and task type, are listed in Table~\ref{main_table}.

\subsection{Evaluation Metrics}
We used F1-Score (Micro-F1), accuracy, and recall to evaluate the performance of classification tasks (multi-classification tasks).
For regression tasks, we utilized the 1-Relative Absolute Error (1-RAE) from studies~\cite{wang2022group,xiao2024traceable} to evaluate the performance.
The higher values indicate better performance for all metrics.

\subsection{Baseline Algorithms}
\label{base_alg}
We conduct extensive comparisons against 8 methods, encompassing both classical and recent approaches, covering three categories of feature selection: filter, embedded, and RL-based (latest warpper methods).
For filter methods, KBest~\cite{yang1997comparative} selects the K-best features based on statistical measures, and MCDM~\cite{mcdm} uses predefined rankers to assign scores to features and selects them accordingly.
for embedded methods, LASSONet~\cite{lemhadri2021lassonet} utilizes a residual component that integrates feature selection with the model training process, enhancing its generalizability, and GAINS~\cite{xiao2023beyond} embeds features into a vector space and identifies the optimal subset using a gradient ascent search algorithm; for wrapper methods (including reinforcement learning-based ones), CompFS~\cite{yasuda2023sequential} provides an efficient, single-pass implementation of greedy forward selection, utilizing attention weights at each step to approximate feature importance, SAFS~\cite{imrie2022composite} ensembles feature selection models within a neural network to identify feature groups and measures the similarity between these groups to select the optimal feature subset.
For RL-based methods, MARLFS~\cite{marlfs} is a multi-agent system in which the number of agents matches the number of features, and RLAS~\cite{RLAS} synergizes median-initialized filters, Q-learning-based module selection, and importance-aware random grouping.

\subsection{Hyperparameter Settings and Reproducibility}
We performed 200 epochs for \model\ to explore the feature space and an additional 200 epochs to optimize the agents, with the memory size set to 400.
Follow the hyperparameter and ablation experiment results, The policy model of all agents are Actor-Critic, where the actor and critic models are both implemented as two-layer neural networks, with (64, 8) as hidden size. 
We employed Adam to optimize actor and critic models with a minibatch size of 32.
Following the existing research~\cite{wu2020finite}, to ensure convergence, the critic's learning rate should be around ten times greater than that for the actor. So the learning rates of the actor and critic models were 0.001 and 0.01, respectively.
The hyperparameter $\gamma$ in Equation~\ref{equation:bellman} as 0.9.
We adopted Random Forest as the downstream machine learning model to evaluate the performance of the selected feature subset in each step.
We set all features' Gaussian component number $k$ as the maximum number of their BIC search result.
For feature metadata embedding, we apply \texttt{text-embedding-3-large}~\cite{openai_embedding} based on GPT-4~\cite{openai_gpt4}.
We set the hyperparameters $\alpha$ in Equation~\ref{equation:reward_alpha} and $\lambda$ in Equation~\ref{equation:reward_lambda} as 0.4 and 0.6 based on the results from the hyperparameter study in section~\ref{hyper_exp}.
The hyperparameter settings for the baselines follow their original articles. 

\subsection{Environmental Settings} 
The experiments were conducted on an Ubuntu 18.04.6 LTS operating system, equipped with an AMD EPYC 7742 CPU and 4 NVIDIA V100 GPUs, within the Python 3.11.0 environment and using PyTorch 2.1.1.

\section{Experimental Results}

\begin{table*}[h]
\vspace{-0.2cm}
\caption{Overall performance comparison. The best result is highlighted in \textbf{bold} for each dataset, and the second-best result is highlighted \uline{underlined}.  \textsc{\#Samp} and \textsc{\#Feat} denote the number of samples and features.}
\vspace{-0.2cm}
\label{main_table}
\resizebox{\linewidth}{!}{\setlength\tabcolsep{2.5mm}
\begin{tabular}{lcccccccccccc}
\toprule 
Dataset         & Task & \#Samp. & \#Feat. & $\text{KBest}$ & $\text{LASSONet}$ & $\text{MCDM}$  & $\text{RLAS}$ & $\text{MARLFS}$ & $\text{SAFS}$  & $\text{GAINS}$ & $\text{CompFS}$ & Ours  \\ 
\midrule
SpectF          & C    & 267     & 44      & 79.16          & 82.11            & 79.44          & 80.38         & 77.01           & 80.78          & 80.55          & \underline{83.69}  & $\mathbf{87.56^{\pm 0.22}}$ \\
SVMGuide3       & C    & 1243    & 21      & 76.55          & \underline{77.61} & 75.68          & 76.13         & 75.92           & 75.53          & 77.53          & 72.83            & $\mathbf{78.90^{\pm 0.45}}$  \\
German\_Credit  & C    & 1001    & 24      & 65.75          & 58.85            & \underline{70.49} & 64.91       & 66.89           & 66.89          & 69.40          & 68.27            & $\mathbf{73.07^{\pm 1.13}}$  \\
Credit\_Default & C    & 30000   & 25      & 80.40 & 79.86          & 74.87          & \underline{80.41}         & 80.11           & 77.49          & 80.05          & 78.15            & $\mathbf{80.56^{\pm 0.09}}$  \\
SpamBase        & C    & 4601    & 57      & 90.36          & 89.16            & 89.10          & 91.40         & 90.04           & 90.93          & \underline{91.71} & 89.70          & $\mathbf{92.60^{\pm 0.31}}$  \\
Megawatt1       & C    & 253     & 38      & 84.02          & 89.59            & 89.59          & 83.22         & 87.64           & 86.46          & 89.42          & \underline{91.83}  & $\mathbf{92.70^{\pm 0.73}}$  \\
Ionosphere      & C    & 351     & 34      & 91.48          & 90.17            & 88.85          & \underline{94.11}         & 90.18           & 90.18          & 92.24          & 91.08            & $\mathbf{94.27^{\pm 1.04}}$  \\ 
\midrule
Mice-Protein    & MC   & 1080    & 77      & \underline{82.86} & 82.71          & 81.95          & 77.78         & 80.56           & 77.81          & 81.04          & 78.96            & $\mathbf{83.79^{\pm 0.91}}$  \\
Coil-20         & MC   & 1440    & 400     & 93.92          & 88.19            & 96.53          & 90.97         & 96.19           & 94.41          & \underline{97.22} & 95.16          & $\mathbf{97.56^{\pm 0.06}}$  \\
MNIST           & MC   & 10000   & 784     & 89.67          & 86.40            & 91.25          & 88.30         & 90.75           & 86.65          & \underline{91.40} & 87.65          & $\mathbf{91.75^{\pm 0.24}}$  \\
Otto            & MC   & 61878   & 93      & 72.31          & 71.65            & 73.24          & 72.86         & \underline{73.74} & 72.13          & 72.40          & 71.97            & $\mathbf{74.24^{\pm 0.07}}$  \\ 
Jannis          & MC   & 83733   & 54      & 65.48          & 65.78            & 64.20          & 57.88         & \underline{66.70} & 64.42          & 65.15          & 66.61            & $\mathbf{67.91^{\pm 0.22}}$  \\ 
Cao             & MC   & 4186    & 13488   & 82.81          & 83.65            & 82.37          & 84.57         & 83.71           & 84.67          & 82.47          & \underline{85.57}  & $\mathbf{89.37^{\pm 0.13}}$  \\ 
Han             & MC   & 2746    & 20670   & 70.18          & 72.57            & 71.33          & 74.73         & 74.69           & \underline{75.79} & 74.03          & 73.20            & $\mathbf{81.45^{\pm 0.21}}$  \\ 
\midrule
Openml\_586     & R    & 1000    & 25      & 56.35          & 56.03            & 55.82          & 51.04         & 55.02           & 53.15          & \underline{59.36} & 55.49          & $\mathbf{61.75^{\pm 0.71}}$  \\
Openml\_589     & R    & 1000    & 25      & 53.34          & 52.99            & 54.12          & 49.36         & 52.68           & 45.51          & \underline{59.33} & 48.46          & $\mathbf{61.60^{\pm 1.01}}$  \\
Openml\_607     & R    & 1000    & 50      & 53.63          & 53.70            & 55.37          & 57.42         & 54.82           & 51.82          & \underline{60.44} & 53.18          & $\mathbf{61.70^{\pm 0.69}}$  \\
Openml\_616     & R    & 500     & 50      & 31.16          & 21.69            & 28.27          & 50.38         & 31.36           & 46.16          & \underline{49.54} & 49.16          & $\mathbf{53.90^{\pm 0.86}}$  \\
Openml\_618     & R    & 1000    & 50      & 49.21          & 48.97            & 48.69          & 58.57         & 49.31           & 45.16          & \underline{55.68} & 48.55          & $\mathbf{56.40^{\pm 0.62}}$  \\
Openml\_620     & R    & 1000    & 25      & 53.00          & 54.63            & 55.00          & 41.63         & 54.57           & 50.16          & \underline{61.51} & 55.49          & $\mathbf{62.78^{\pm 0.82}}$  \\
Openml\_637     & R    & 500     & 50      & 23.09          & 25.42            & 23.75          & 42.04         & 25.96           & 34.18          & \underline{39.46} & 36.17          & $\mathbf{40.54^{\pm 0.37}}$  \\
\bottomrule
\vspace{-0.6cm}
\end{tabular}}
 \begin{tablenotes}
  \item {\small * We report F1-Score for classification and multi-class classification, 1-RAE for regression.} 
    \item {{\small ** The standard deviation is computed based on the results of 5 independent runs}}
\end{tablenotes}
\vspace{-0.6cm}
\end{table*}

\subsection{Main Comparison}
\label{main_exp}
This experiment aims to answer the question: \textit{
Is \model\ capable of effectively refining a superior feature subspace across all domains?}
We compared our method with classical and recent approaches across various datasets. Table~\ref{main_table} shows the overall results of \model{}.
We observed that \model\ outperforms the classical, deep learning-based, and RL-based methods on all tasks.
The underlying driver is that, compared to the classical methods (KBest, LASSONet, and MCDM), \model\ discovers fine-grained differentiation between features through LLMs to construct a more accurate feature subspace beyond numerical statistics, resulting in better performance.
Compared to deep learning-based methods (SAFS, CompFS, GAINS), \model\ manages clustered features using a hierarchical architecture that considers the correlation between features from coarse to fine grain, enables better exploration of feature subspace.
Compared to RL-based methods (RLAS, MARLFS), agent collaboration is viewed through the hierarchical reinforcement learning architecture in \model{}, leading to an improvement in performance.
Overall, this experiment illustrates that \model\ is prominent and robust across diverse datasets, emphasizing various tasks, sizes, and fields, underlining its universal applicability for feature subspace exploration.

\subsection{Space/Time Complexity Analysis}

\noindent\textbf{Space/Time Complexity Comparison with OAPF.}
This experiment aims to answer the question:
\textit{Does \model\ have an advantage in space/time complexity over One-Agent-Per-Feature (OAPF) approaches?}
Table~\ref{exp:space} compares the number of active agents as well as the overall runtime between the OAPF architecture and \model{} on four datasets.
We observed that the number of active agents for \model\ is reduced by at least 70.61\% across all four datasets. Furthermore, this reduction in active agents directly translates to substantial time savings: on every reported dataset, the overall runtime of \model{} decreases by more than 30\% to 50\% compared to OAPF.
These empirical results corroborate our complexity proof in Section~\ref{section:time_complexity_proof}, which shows that the expected number of active agents for \model\ is $\mathcal{O}(\log N)$, in contrast to the $\mathcal{O}(N)$ active agents required by OAPF. The underlying reason is that \model's hierarchical decision structure, built upon the divide-and-conquer paradigm, achieves superior decision efficiency, significantly reducing the number of active agents and thus both space and time complexity.
In summary, by decreasing the number of active agents, the hierarchical architecture not only achieves better space efficiency but also leads to a dramatic reduction in runtime compared to OAPF approaches.

\label{complex_analysis}
\begin{table*}[htbp]
\centering
\caption{
 Comparison of the average number of active agents between OAPF (one agent per feature) and our model. 
}
\label{exp:space}
\resizebox{0.95\linewidth}{!}{\setlength\tabcolsep{2.5mm} \small
\begin{tabular}{lcccccccc}
\toprule
Dataset & \multicolumn{2}{c}{Spam Base} & \multicolumn{2}{c}{Mice-Protein} & \multicolumn{2}{c}{Coli-20} & \multicolumn{2}{c}{MNIST} \\ \midrule
Method              & Ours$^{-h}$ & Ours  & Ours$^{-h}$ & Ours  & Ours$^{-h}$ & Ours   & Ours$^{-h}$ & Ours    \\ \midrule
Active Agents (Avg.) & 56     & $16.46^{-70.61\%}$ & 76     & $13.45^{-82.30\%}$ & 400    & $104.39^{-73.90\%}$ & 784    & $156.73^{-80.01\%}$  \\ \midrule
\textcolor{black}{Time Consumption (s)}      & 146.56   & $76.28^{-47.95\%}$ & 177.77   & $79.53^{-55.26\%}$ & 547.05  & $352.86^{-35.49\%}$ & 2291.25  & $1317.28^{-42.50\%}$ \\ \bottomrule
\end{tabular}
}
\vspace{-0.5cm}
\end{table*}

\begin{figure*}[htbp]
\centering 
\begin{subfigure}{0.245\textwidth}
\includegraphics[width=\textwidth]{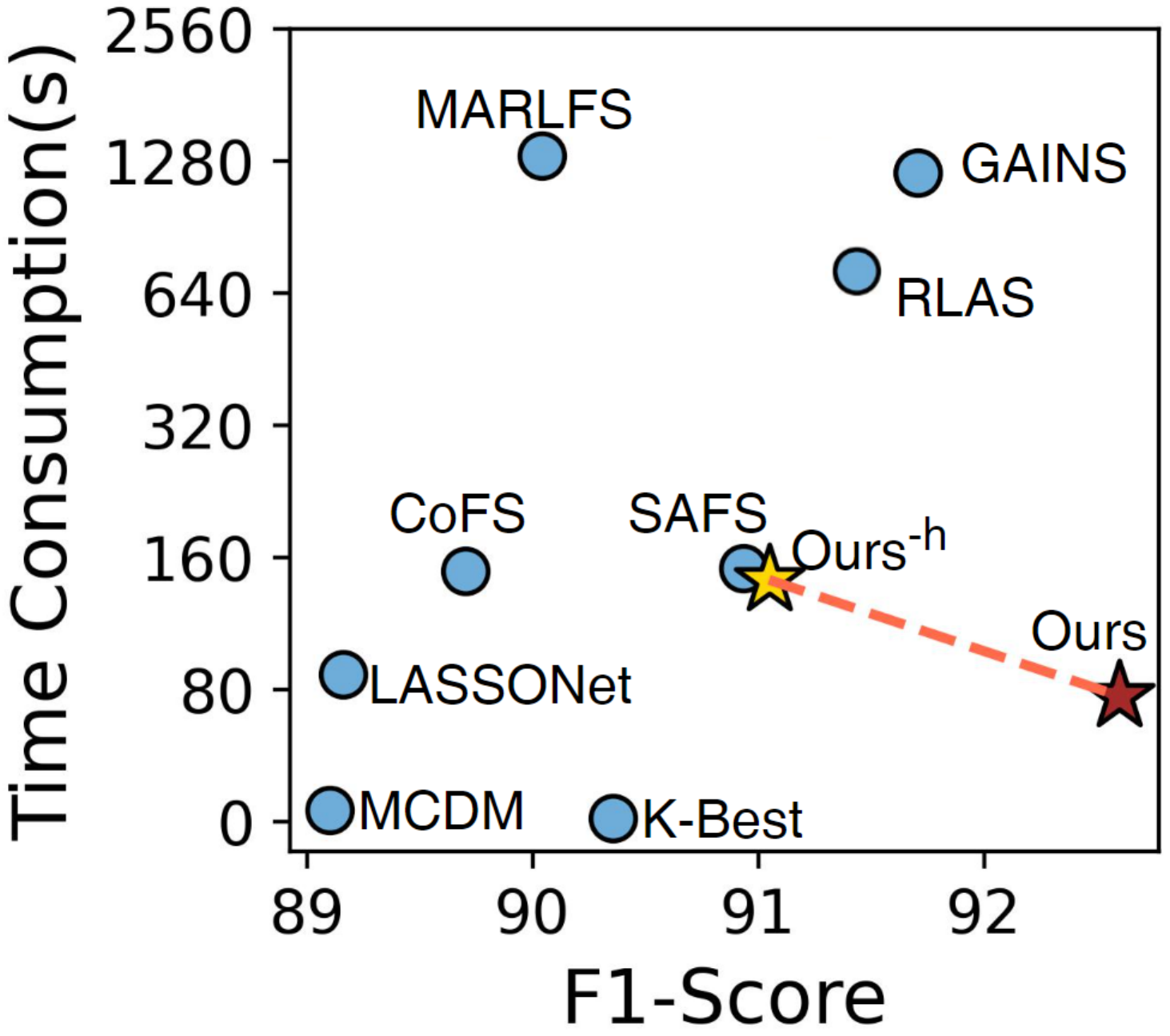}
\caption{Spam Base}
\end{subfigure}
\begin{subfigure}{0.24\textwidth}
\includegraphics[width=\textwidth]{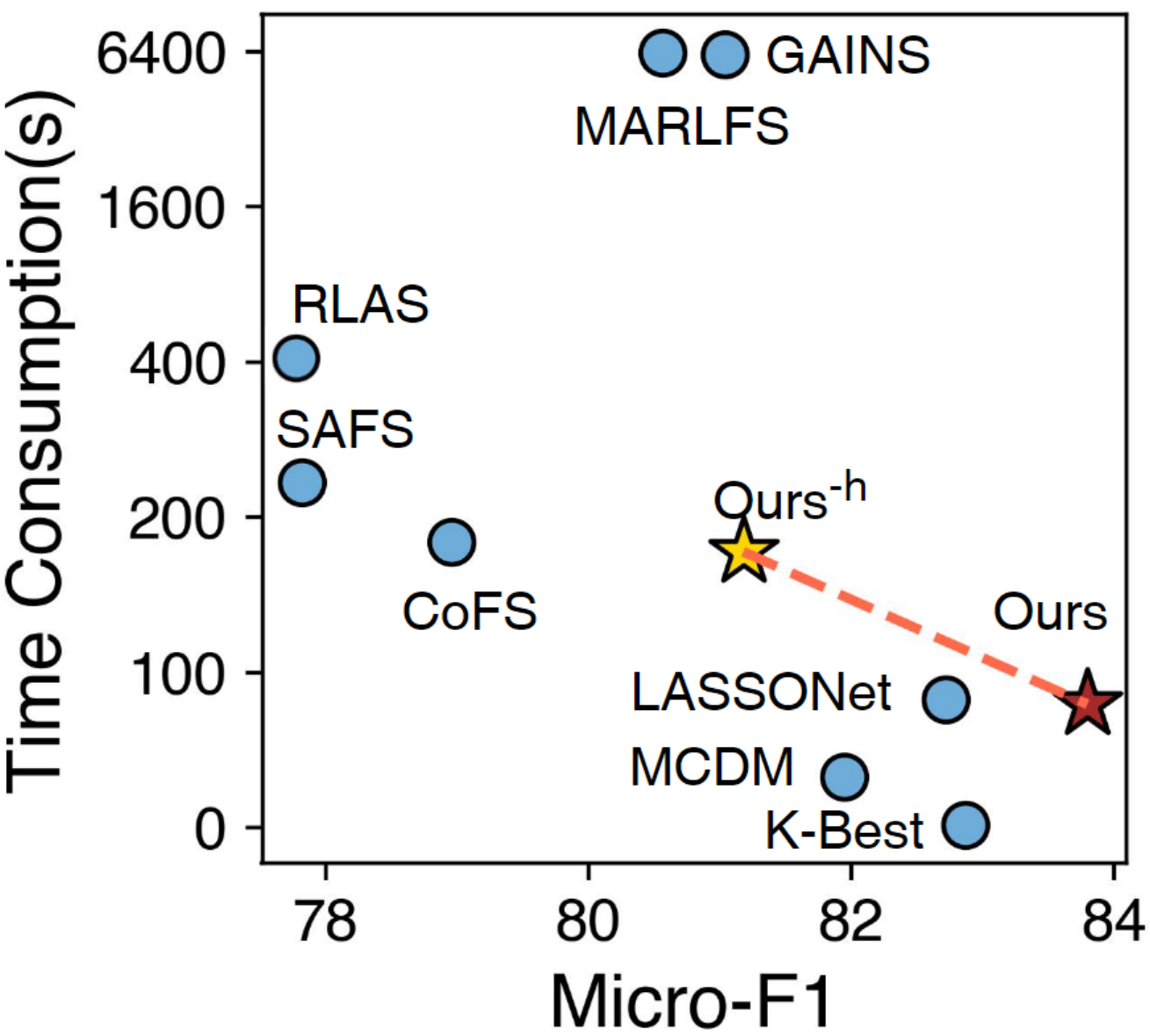}
\caption{Mice-Protein}
\end{subfigure}
\begin{subfigure}{0.24\textwidth}
\includegraphics[width=\textwidth]{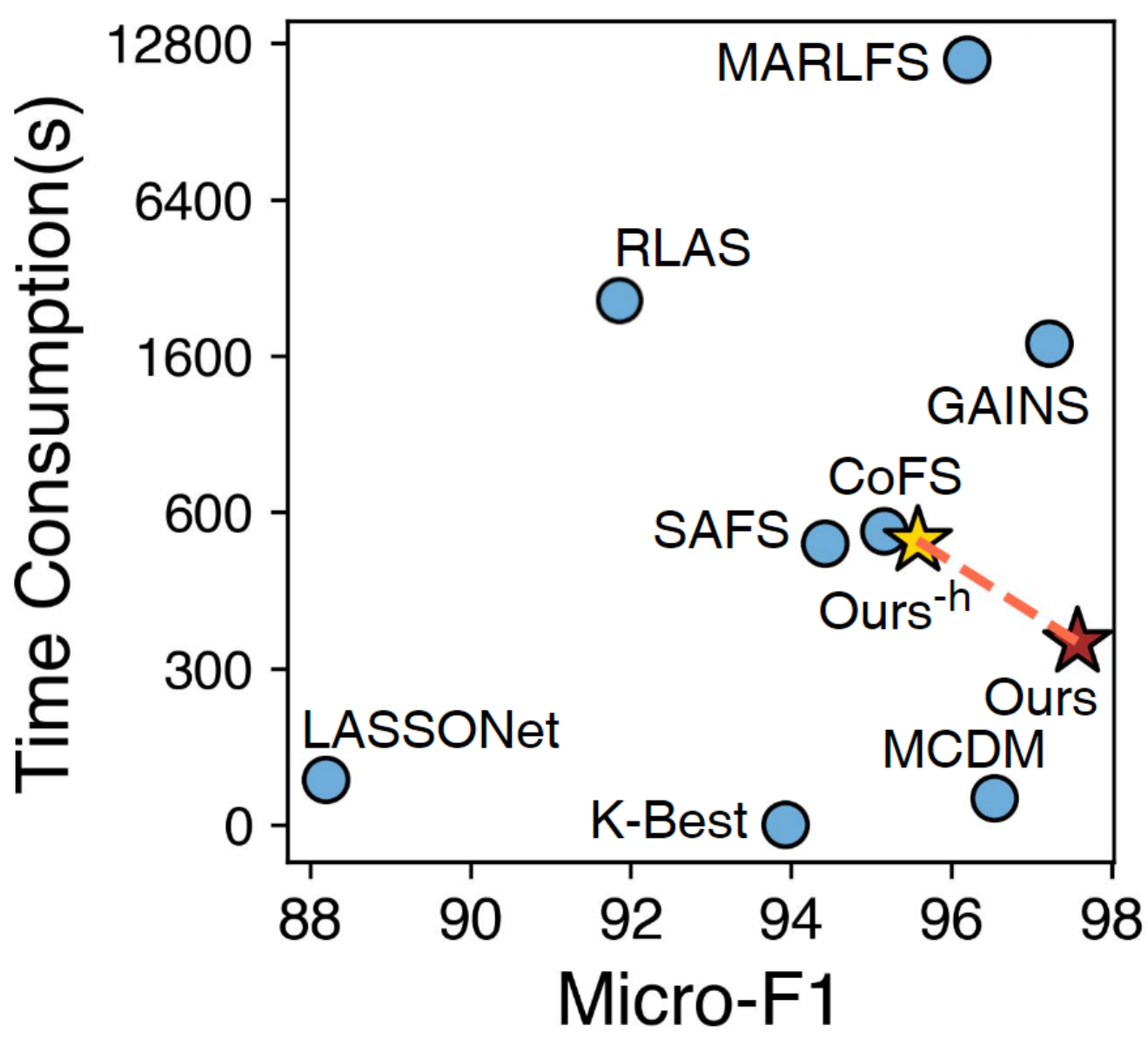}
\caption{Coil-20}
\end{subfigure}
\begin{subfigure}{0.24\textwidth}
\includegraphics[width=\textwidth]{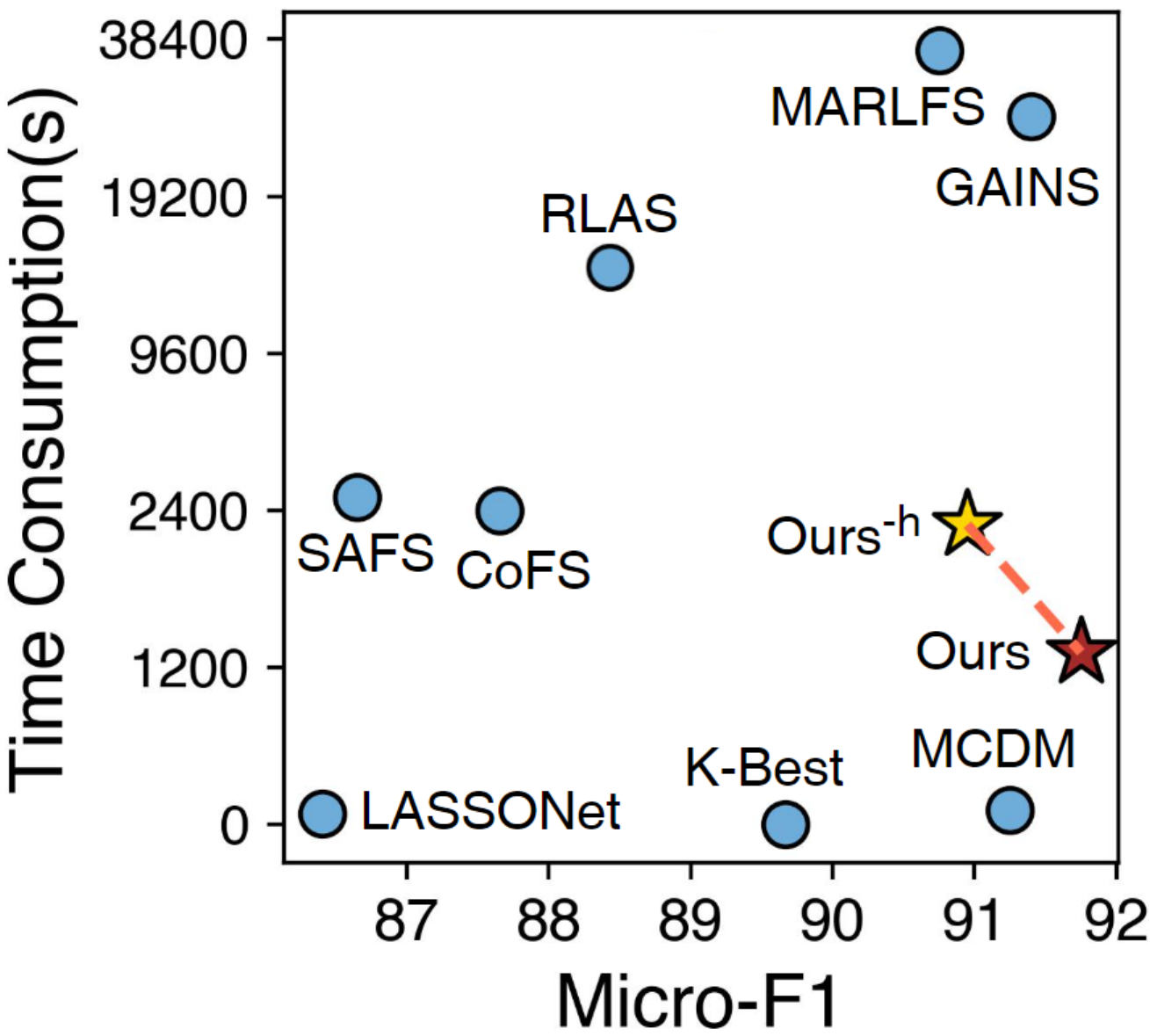}
\caption{MNIST}
\end{subfigure}
\caption{Comparison of \model\ and baseline methods in downstream task performance and time consumption.}
\vspace{-0.3cm}
\label{exp:runtime}
\end{figure*}

\noindent\textbf{Space/Time Complexity Comparison with Baselines.}\label{sec:space_time_complexity_comparison}
This experiment aims to answer the question: \textit{Does \model\ excel in both time and performance?}
Figure~\ref{exp:runtime} shows the performance and time consumption of \model\ compared to other baseline methods and our ablation model $\model^{-h}$ which replaces the multi-agent hierarchical reinforcement learning architecture with One-Agent-Per-Feature (OAPF) architecture.
We found that \model\ outperforms all baselines and $\model^{-h}$, with time consumption comparable to traditional statistical methods (KBest, MCDM, and LASSONet) and much superior to methods based on RL (RLAS and MARLFS), deep learning-based methods (SAFS, CoFS, and GAINS), and $\model^{-h}$.
Compared to the deep learning-based methods, our agents only consists of two-layer neural networks, which are less time-consuming compared to numerous-parameters models used by these methods.
Compared to RL-based methods, \model\ applies the hierarchical decision architecture to manage clustered features, greatly reducing the number of active agents, where active agents will make decisions when the rest of the agents remain inactive, enabling more efficient iteration and less time consumption.
Further, \model\ far outperforms $\model^{-h}$ in both performance and time, underlining the fact that the hierarchical decision architecture can capture correlations between features, facilitate agents’ collaboration, and reduce the number of active agents.
In conclusion, with effective state extraction and hierarchical architecture design, \model\ offers great advantages in terms of time and performance.

\subsection{Ablation Study}\label{abl_exp}
We designed three variants: \textbf{$\model^{-s(GMM)}$} which only utilizes the Gaussian Mixture Model for feature representation; and \textbf{$\model^{-s(LLM)}$}, which only employs the Large Language Model for feature representation;
\textbf{$\model^{-h}$}, which replaces the multi-agent hierarchical reinforcement learning architecture with OAPF architecture.


\begin{figure*}[!h]
    \centering
    \begin{subfigure}[b]{0.245\textwidth}
        \centering
        \includegraphics[width=\linewidth]{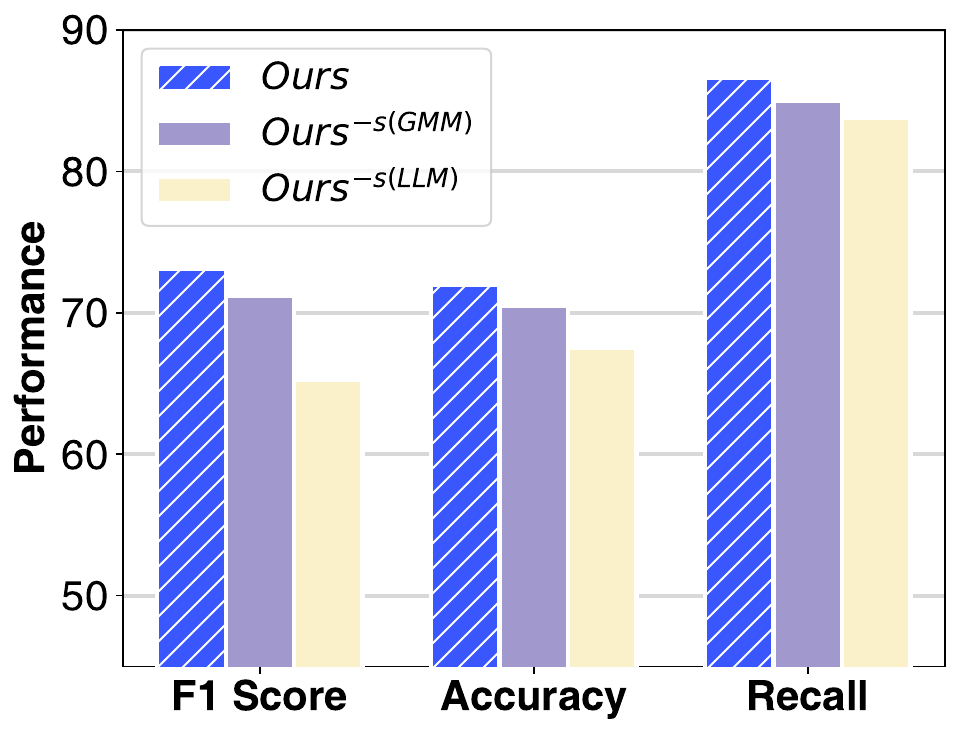}
        \caption{German Credit}
    \end{subfigure}
    \begin{subfigure}[b]{0.24\textwidth}
        \centering
        \includegraphics[width=\linewidth]{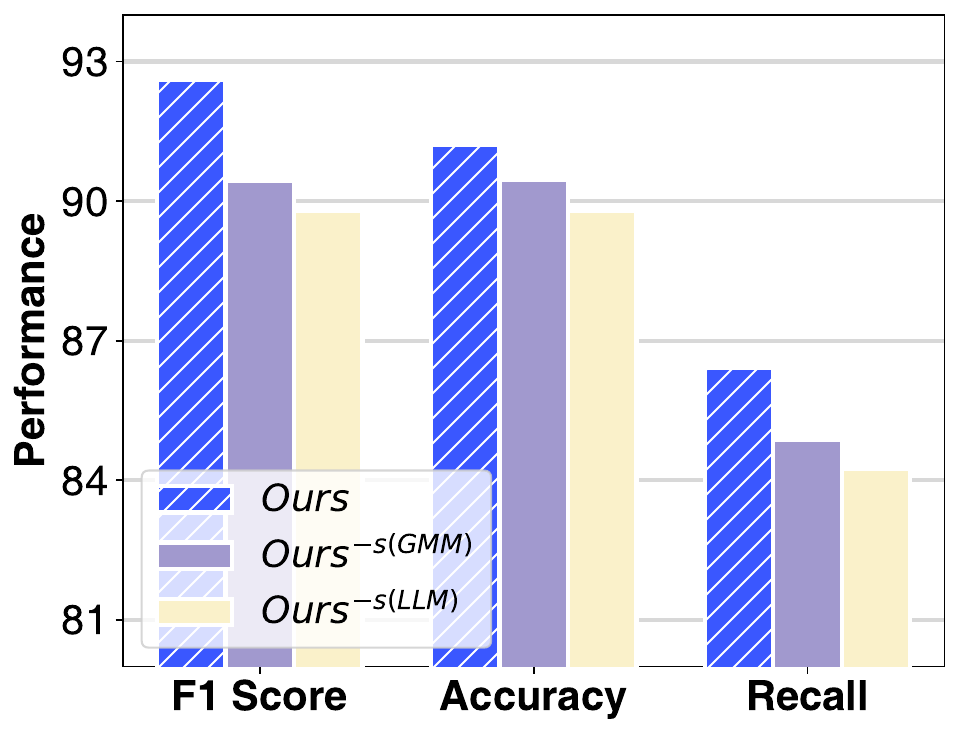}
        \caption{Spam Base}
    \end{subfigure}
    \begin{subfigure}[b]{0.24\textwidth}
        \centering
        \includegraphics[width=\linewidth]{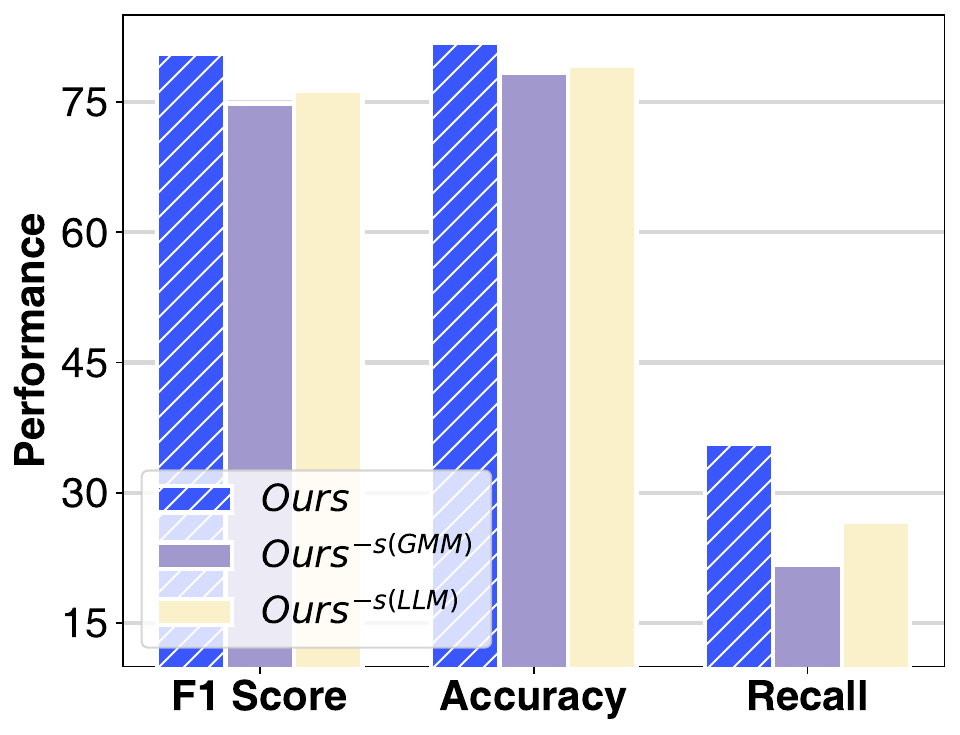}
        \caption{Credit Default}
    \end{subfigure}
    \begin{subfigure}[b]{0.24\textwidth}
        \centering
        \includegraphics[width=\linewidth]{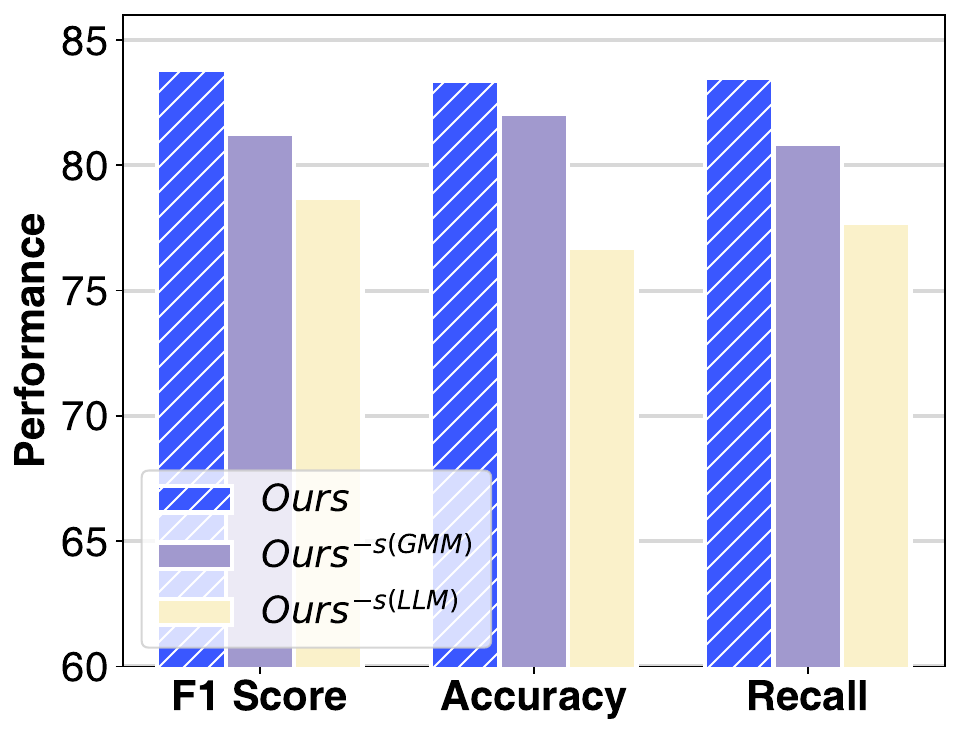}
        \caption{Mice-Protein}
    \end{subfigure}
    \caption{The impact of different state representation methods.}
    \vspace{-0.4cm}
    \label{fig:exp_abl_s}
\end{figure*}

\noindent\textbf{Impact of Hybrid State Representation.}
\label{experiment:hybrid_state_representation}
Figure~\ref{fig:exp_abl_s} shows the comparison results of feature representation variants on two classification tasks and two regression tasks.
We find that \model\ is overall better than $\model^{-s(GMM)}$ and $\model^{-s(LLM)}$.
The underlying driver for this observation is that, compared to $\model^{-s(GMM)}$, the hybrid state representation utilizes the power of LLMs to interpret and extract information from feature metadata, providing valuable insights for hierarchical agents to optimize and explore feature subspace, leading to a better feature selection.
Compared to $\model^{-s(LLM)}$, the hybrid state representation component integrates GMM statistical data, providing a better description of the distribution of each feature, resulting in a performance improvement.
In conclusion, this experiment illustrates that the hybrid state representation component incorporates meaningful information from feature names and statistical data, playing an important role in \model{}.


\begin{figure}[!h]
\centering 
\vspace{-0.2cm}
\begin{subfigure}{0.48\textwidth}
\includegraphics[width=\textwidth]{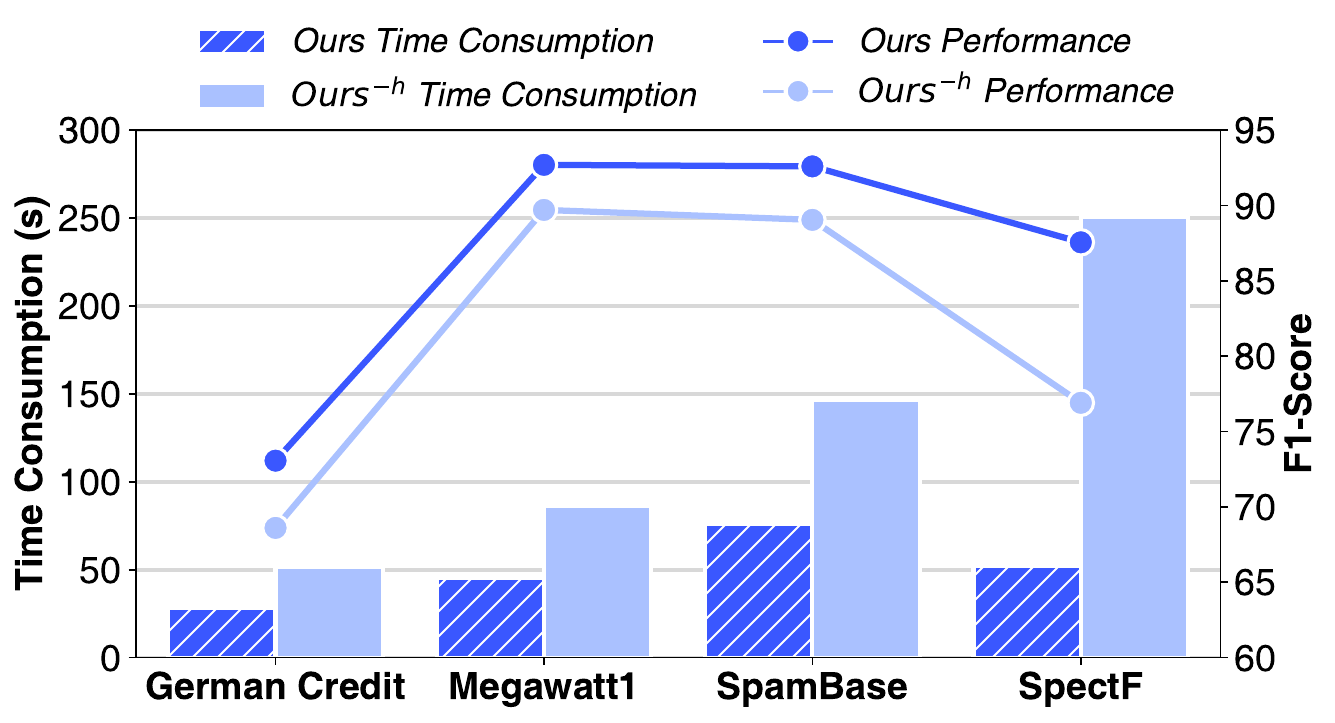}
\end{subfigure}
\begin{subfigure}{0.48\textwidth}
\includegraphics[width=\textwidth]{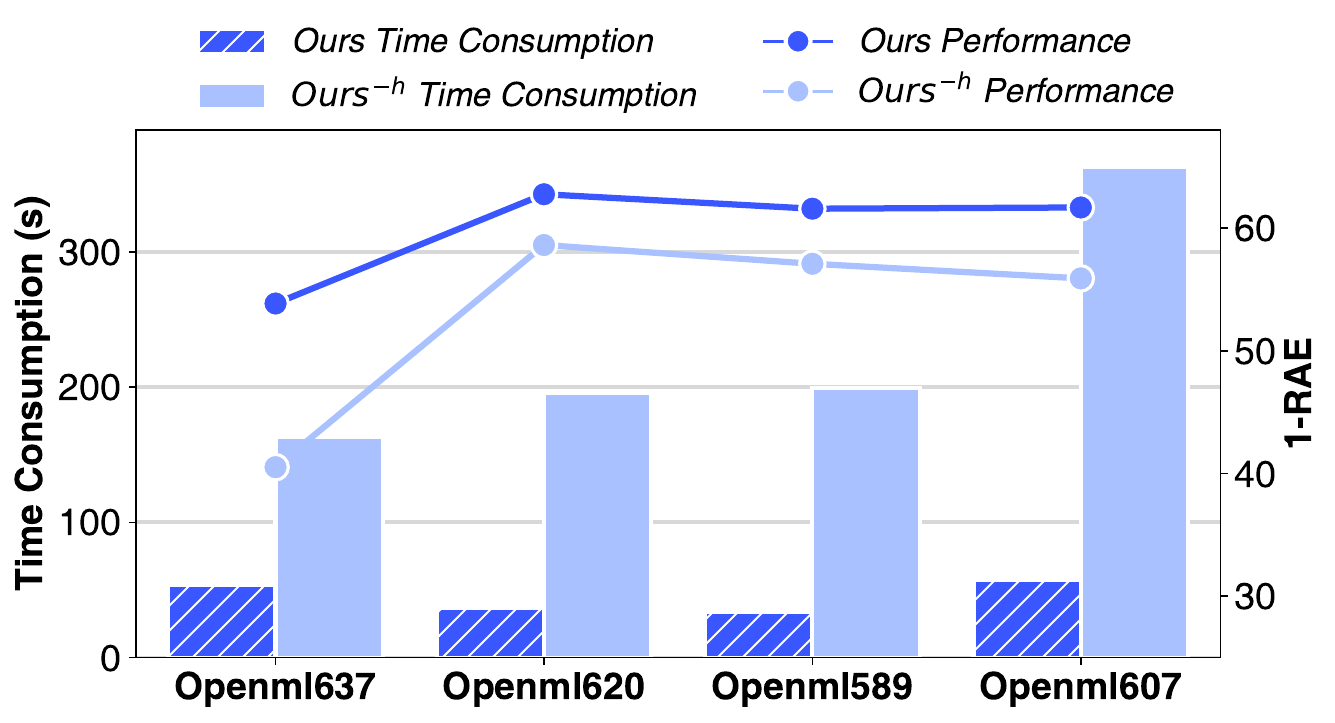}
\end{subfigure}
\caption{The impact of multi-agent hierarchical reinforcement learning architecture.}
\vspace{-0.4cm}
\label{fig:abl_h}
\end{figure}

\smallskip
\noindent\textbf{Impact of HRL Architecture.}
Figure~\ref{fig:abl_h} compares HRL architecture variants on two classification tasks and two regression tasks.
We can observe that \model\ reduces time consumption while outperforming $\model^{-h}$ across performance. 
This implies that the hierarchical agent structure greatly reduces the number of active agents, thus reducing the consumed time. 
Meanwhile, this hierarchical architecture provides a variable granularity toward feature subspace exploration, bringing about improved performance.

\begin{figure}[!ht]
\centering 
\begin{subfigure}{0.33\textwidth}
\includegraphics[width=\textwidth]{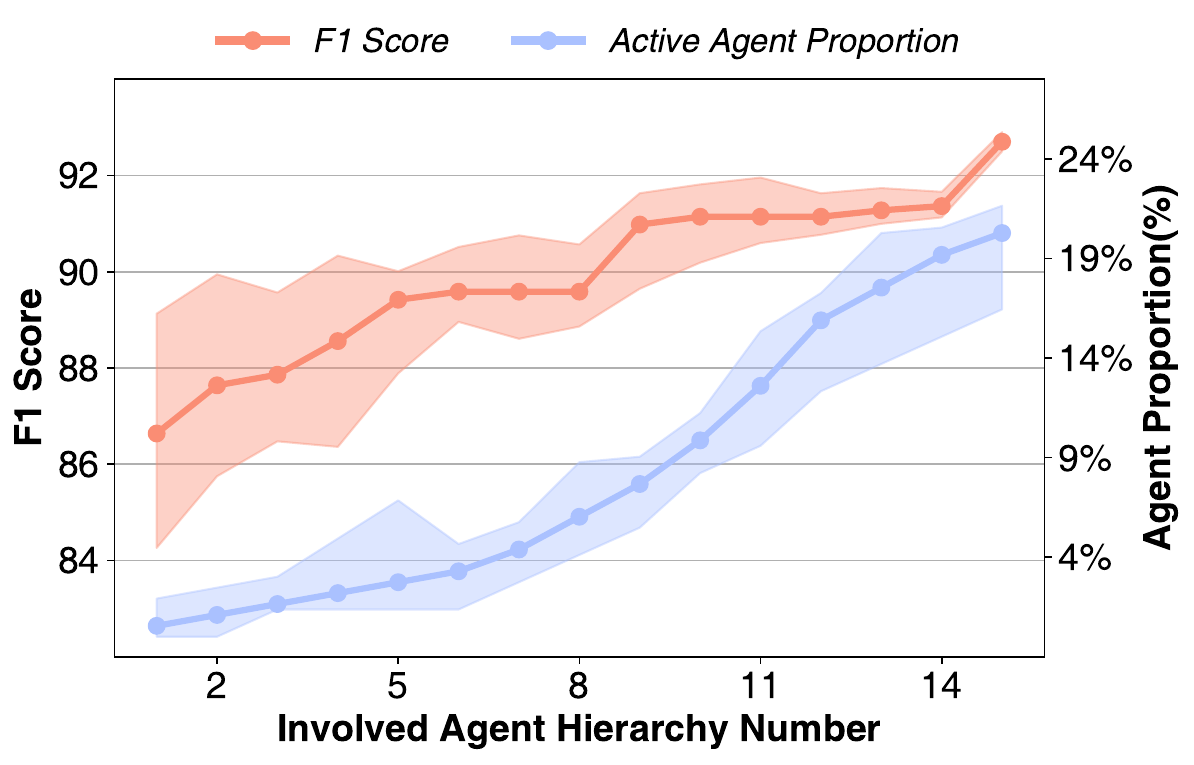}
\caption{Megawatt1}
\end{subfigure}
\begin{subfigure}{0.33\textwidth}
\includegraphics[width=\textwidth]{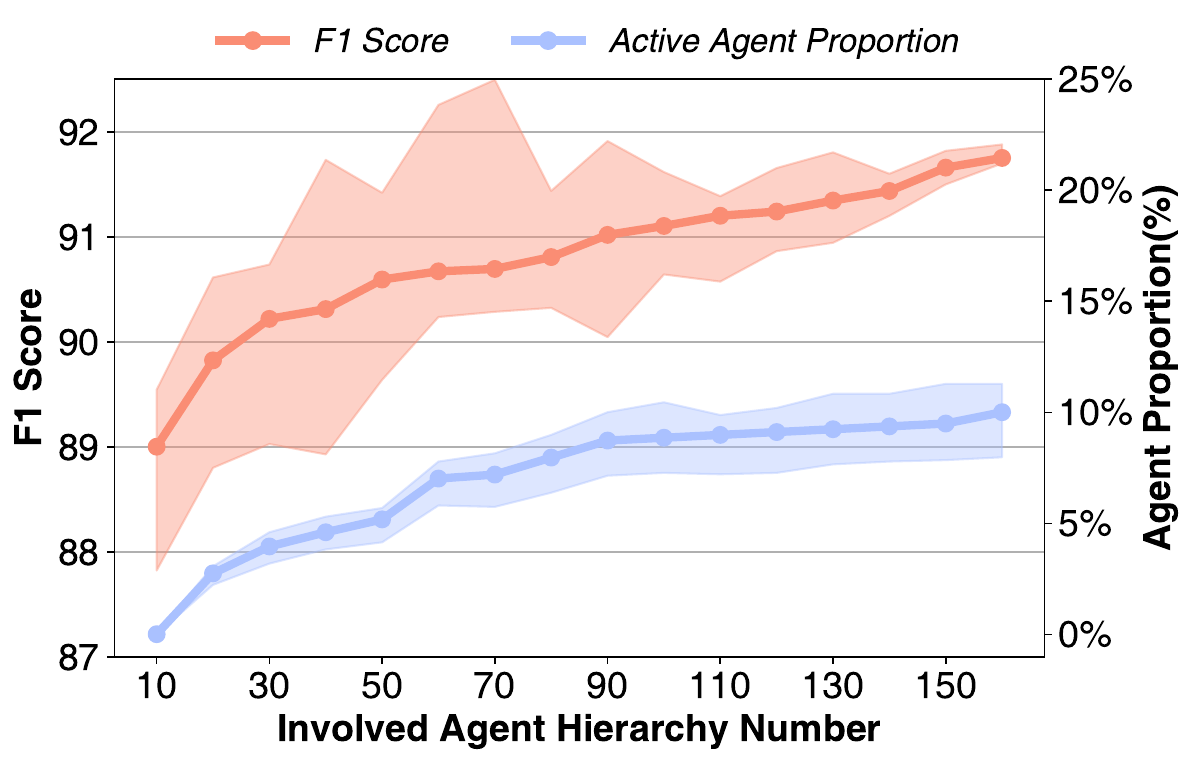}
\caption{MNIST}
\end{subfigure}
\begin{subfigure}{0.33\textwidth}
\includegraphics[width=\textwidth]{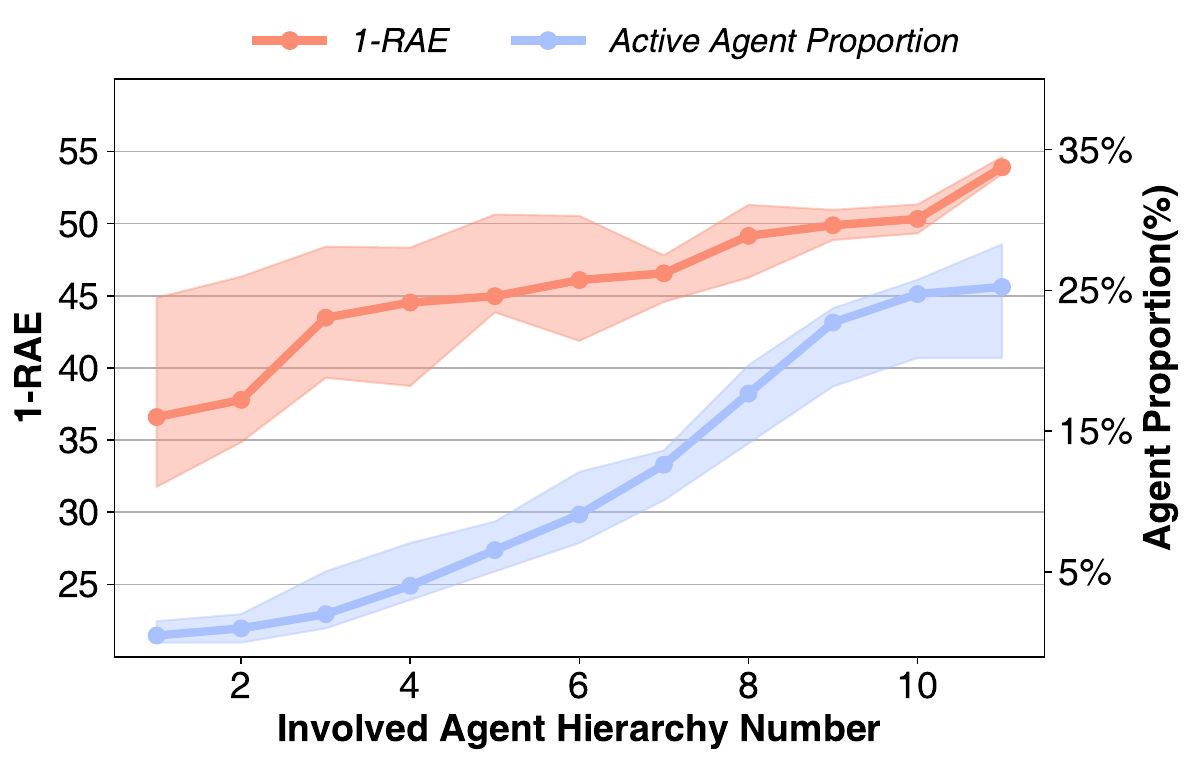}
\caption{OpenML\_616}
\end{subfigure}
\caption{The performance on downstream tasks and the proportion of active agents with changes in the granularity of the agent hierarchy involved.}
\label{fig:gran}
\end{figure}

\subsection{Analysis of the Multi-Agent Hierarchy Architecture Setting}\label{Granularity}
This experiment aims to answer the question: \textit{How does a finer or coarser-grained Agent Hierarchy setting impact the overall system?}
In Algorithm~\ref{algorithm:feature_aggregation}, features with similar semantic or mathematical distributions are controlled by the same agent, forming a hierarchical tree structure. 
In this experiment, we selected one dataset from each of the three tasks—Megawatt1, MNIST, and OpenML\_616—and varied the involved agent hierarchy number. 
A smaller value indicates that fewer agent layers participate in feature selection, leading to a coarser-grained decision-making process.

From the result in Figure~\ref{fig:gran}, we can observe that increasing the number of hierarchy levels leads to consistent improvement of performance and stabilization, as evidenced by higher scores and reduced standard deviation. 
The underlying driver of this behavior is the finer exploration of the feature space, facilitated by the increased number of agents in more granular hierarchies. 
Further, we note that coarser hierarchies, which activate fewer agents, result in lower performance due to a less detailed exploration of relevant features. 
This suggests that a more granular hierarchy enables better agent coordination and decision-making, improving feature selection efficiency. 
However, this improvement comes at the cost of higher computational complexity and more active agents, indicating a trade-off between agent involvement and system efficiency.

\subsection{Analysis of the Hybrid State Extraction Method}
\label{experiment:state_extraction}
This experiment aims to answer the question: \textit{Is combining GMM and GPT-4 for feature state extraction optimal?}
\begin{figure}[!h]
\centering
\includegraphics[width=0.65\linewidth]{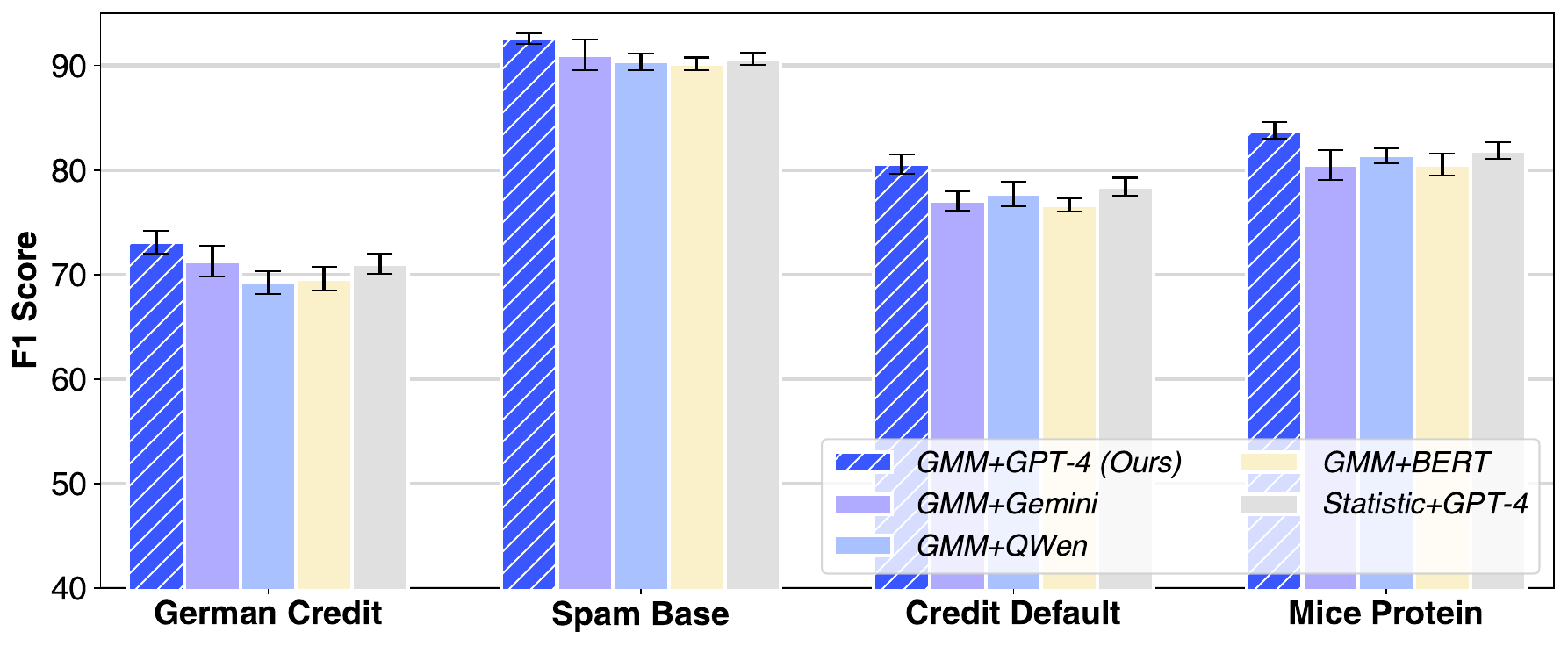}
\caption{Comparison of different feature state extraction models.}
\label{fig:feature_state_extraction}\vspace{-0.4cm}
\end{figure}
We compared various feature distribution extraction methods and semantic extraction models with different parameter scales in several datasets, shown in Figure~\ref{fig:feature_state_extraction}. \textbf{Statistic+GPT-4} applies the mean, maximum, minimum, variance, the first quartile, the second quartile, and the third quartile to extract feature distribution and GPT-4 to leverage metadata. \textbf{GMM+BERT, GMM+QWen, GMM+Gemini} use GMM to pick up statistic and leverage semantic information using BERT~\cite{bert}, \texttt{gte-Qwen2-7B-instruct}~\cite{gte-Qwen2-7B-instruct}, \texttt{gemini-embedding-exp} ~\cite{gemini-embedding-exp-03-07}, respectively. 
By comparing \textbf{Statistic+GPT-4} and \textbf{GMM+GPT-4}, we can observe that the feature distribution obtained using GMM has better results compared to statistics. 
The underlying driver of this behavior is that statistics such as mean, maximum, minimum, etc., can only provide a simple description of the data and cannot capture the inner structure of the data, whereas GMM can approximate an arbitrarily shaped probability distribution by a linear combination of multiple Gaussian distributions, which better captures this complexity and leads to better results.
By comparing \textbf{GMM+BERT, GMM+QWen, GMM+Gemini, GMM+GPT-4}, we find that GPT-4 extracts semantic information with better performance than other models. 
This implies that compared to BERT and QWen, the model architecture and scale of GPT-4 are larger and capable of capturing more complex semantic relationships and contextual information. Compared to Gemini, GPT-4's data screening and cleaning criteria reduce errors and biases in training data, further improving the accuracy and reliability of semantic information extraction.
In summary, as GMM and GPT-4 can accurately extract feature distribution and semantic information from data, they are the best choice for feature state extraction.

\subsection{Analysis of the Feature Clustering Algorithm Selection}
\label{exp:clustering_model}
This experiment aims to answer the question: \textit{Whether the H-clustering algorithm has superior performance compared to other clustering methods in the feature subspace exploration task?} We compared the H-clustering (Ours) with KMeans~\cite{Kmeans},  DBSCAN~\cite{DBSCAN} and SpectralClustering~\cite{spectral_clustering} in four datasets with different quantitative numbers of features in Figure~\ref{fig:exp_cluster_model}.

\begin{figure}[!h]
    \centering
        \centering
\includegraphics[width=0.65\linewidth]{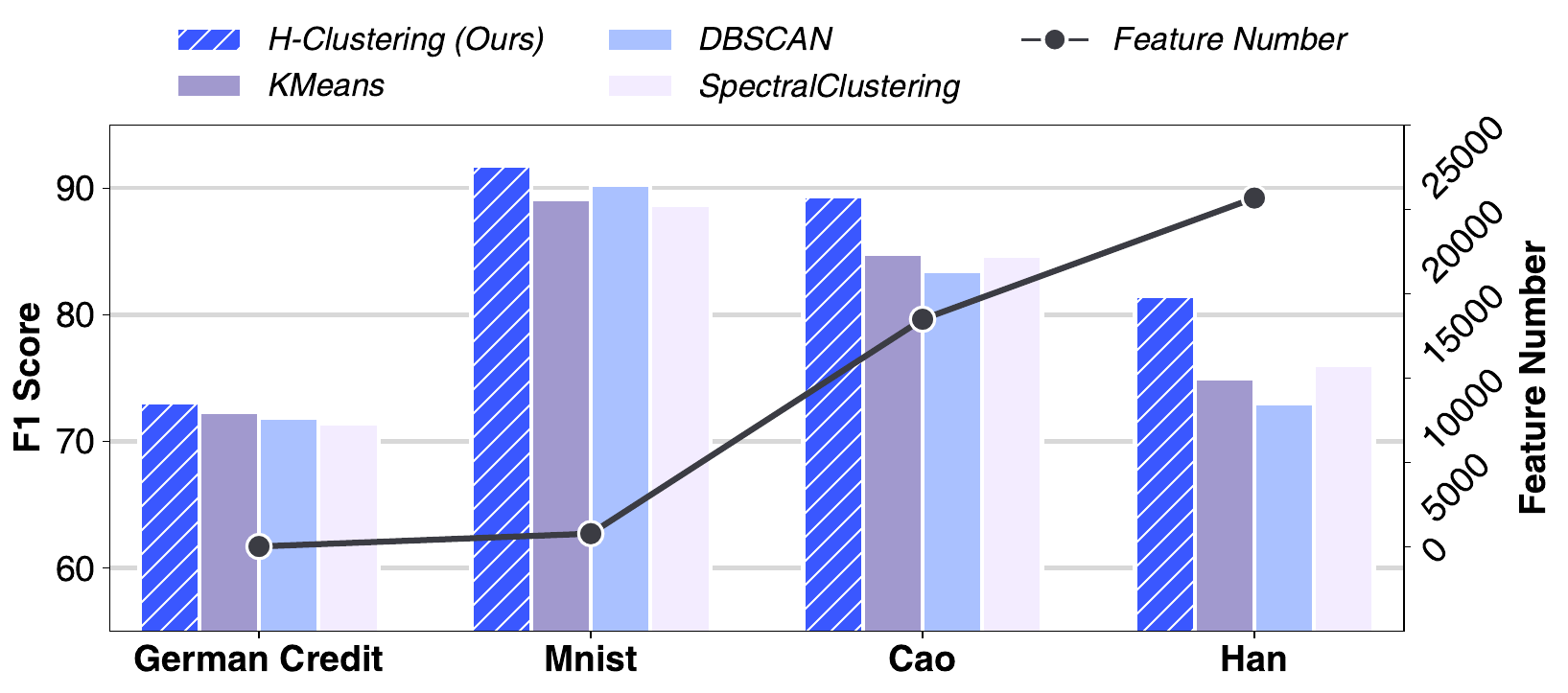}
\caption{{The impact of different clustering algorithms.}}
\label{fig:exp_cluster_model}
\vspace{-0.4cm}
\end{figure}

We can observe that the H-clustering outperforms all clustering algorithms, and the advantages of H-clustering become more apparent as the number of features increases.
The underlying driver of this behavior is that, compared to ordinary clustering algorithms, H-clustering aggregates features from fine-grained to coarse-grained, constructs a hierarchical decision architecture, and considers the feature selection problem from more granularities, leading to better performance. At the same time, this coarse-to-fine decision-making approach decomposes an elaborate task into several relatively simple tasks, which naturally has the advantage of dealing with complex problems, so H-clustering is more advantageous for tasks with a larger number of features.
In summary, H-clustering decides feature selection problems at different granularities, has better performance, can handle more complex problems, thus outperforming other methods.

\begin{figure*}[!h]
    \centering
    \begin{subfigure}[b]{0.33\textwidth}
        \centering
\includegraphics[width=\linewidth]{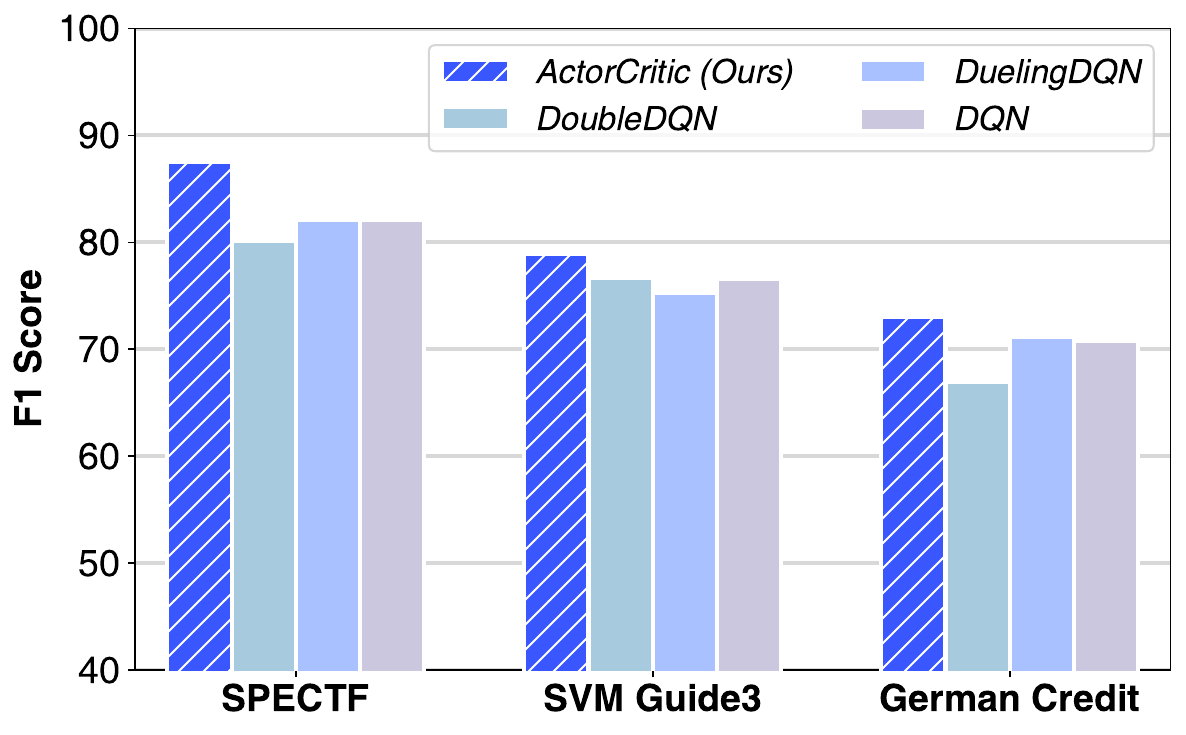}
        \caption{Classification Task}

    \end{subfigure}
    \begin{subfigure}[b]{0.33\textwidth}
        \centering
\includegraphics[width=\linewidth]{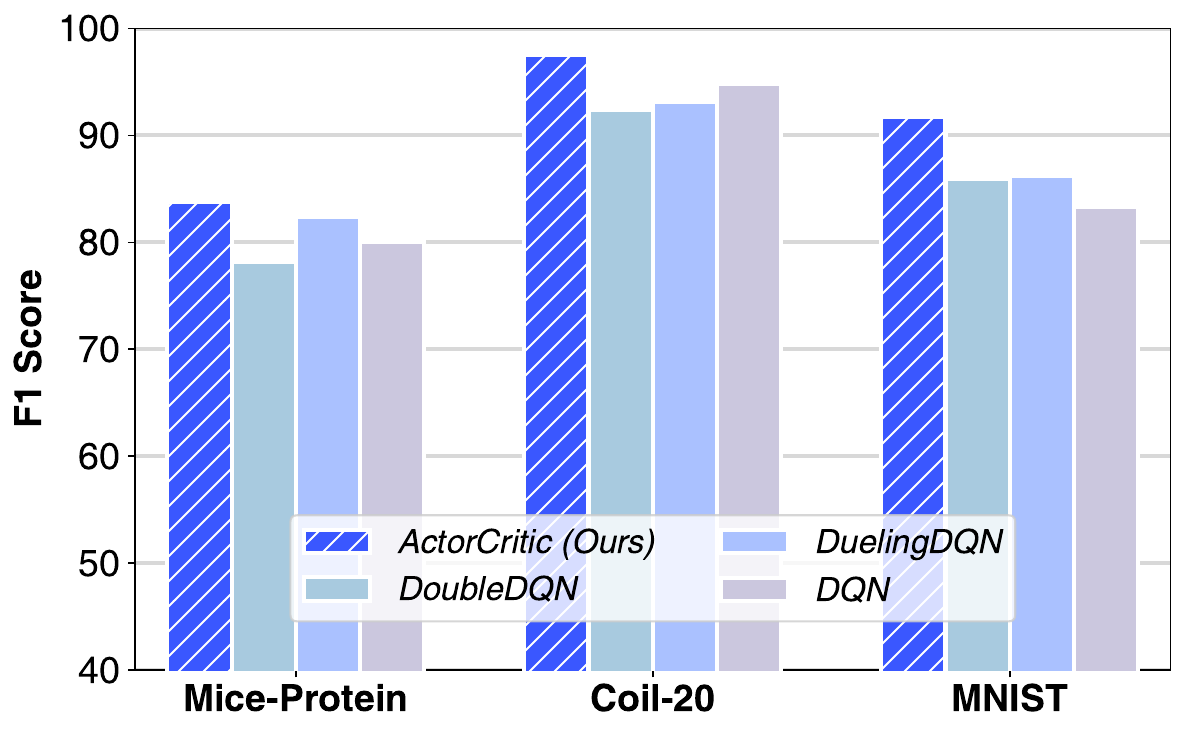}
\caption{Multi-class Classification Task}
\end{subfigure}
    \begin{subfigure}[b]{0.33\textwidth}
        \centering
\includegraphics[width=\linewidth]{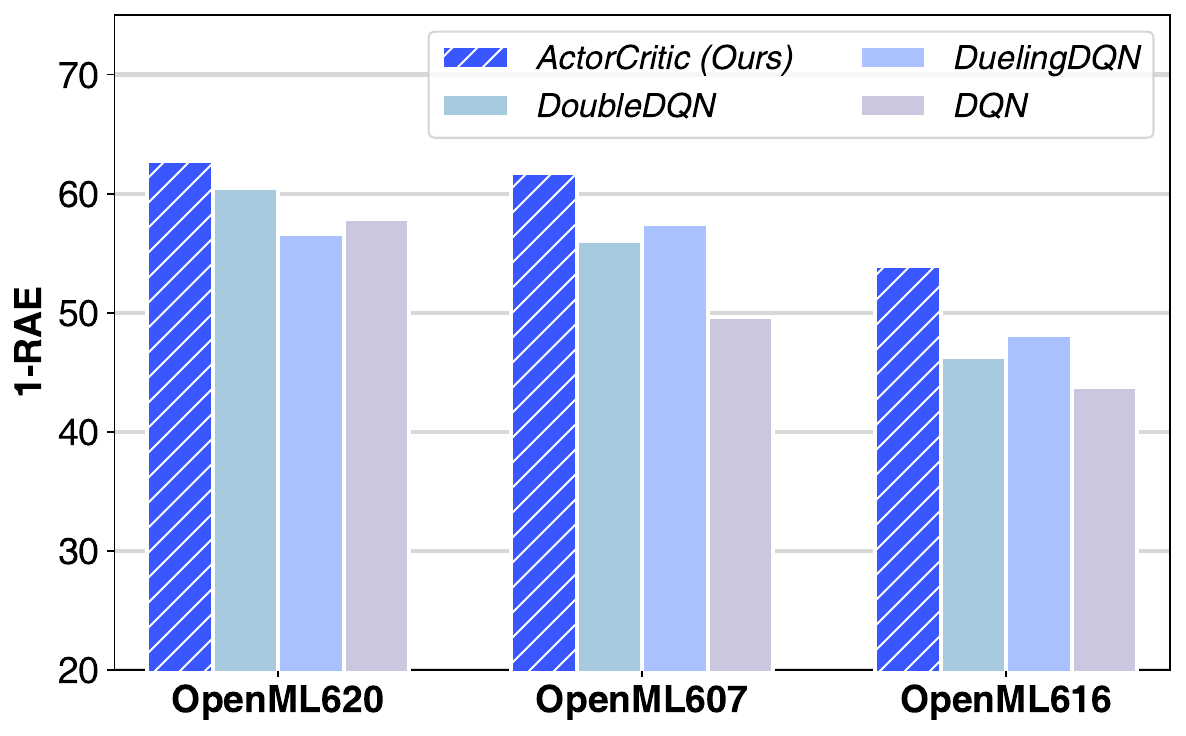}
        \caption{Regression Task}
    \end{subfigure}
    \caption{The impact of different policy model.}
    \vspace{-0.4cm}
    \label{fig:exp_policy_model}
\end{figure*}

\subsection{Analysis of Reinforcement Learning Backend Selection}
\label{section:policy_model}
This experiment aims to answer the question: \textit{Whether the policy model used in \model\ is superior to other existing models?}
We compared ActorCritic~\cite{haarnoja2018soft} (Ours) with DQN~\cite{dqn}, Double DQN~\cite{double_dqn} and Dueling DQN~\cite{dueling_dqn} on binary classification, multi classification and regression tasks in Figure~\ref{fig:exp_policy_model}.
We find that ActorCritic outperforms other methods in all tasks.
This implies that, compared to methods such as DQN, which rely on indirect updating of Q-values, ActorCritic combines policy gradient and value function estimation and is able to directly optimize the policy, reducing variance and improving learning efficiency, resulting in better performance on all tasks.
In conclusion, ActorCritic is a suitable choice for the policy model due to its higher efficiency and stability of strategy optimization.

\begin{table}[htbp]
\centering
\caption{Performance under different shuffling ratios of feature descriptions.
“\textbf{Original}” denotes the performance when using all original features without any feature selection.}
\label{table:suffling_test}
\begin{tabular}{lcccccc}
\toprule
\textbf{Dataset} &\textbf{HRLFS (0\%)} & \textbf{25\%} &\textbf{ 50\%} & \textbf{75\%} & \textbf{100\%} & \textbf{Original} \\
\midrule
German\_Credit & 73.07 & 72.95 & 72.18 & 71.41 & 70.13 & 65.44 \\
Credit\_Default & 80.56 & 79.31 & 78.11 & 76.54 & 75.02 & 76.10\\
SpectF & 87.56 & 87.05 & 86.47 & 85.57 & 84.13 & 75.96\\
SpamBase&92.60&92.24&91.76&90.96&89.91 & 90.13\\
\bottomrule
\end{tabular}
\end{table}

\subsection{Analysis of Misleading Metadata}
\label{sec:misleading_metadata}

This experiment aims to answer the question: \textit{
Whether hybrid feature state extraction is robust to noisy metadata?}
We selected four datasets and randomly shuffled 25\%, 50\%, 75\%, and 100\% of the feature descriptions to test robustness in the face of noisy and permuted metadata, shown in Table \ref{table:suffling_test}. 
We found that the method's performance was not significantly affected by erroneous features across all datasets. 
When 0–50\% of the feature descriptions were scrambled, the method's performance declined by a maximum of 2.45. 
Even when 100\% of the feature descriptions were shuffled, performance declined by only 5.54\%. 
In this case, most datasets with HRLFS still outperformed the “Original” setting (no feature selection) across all datasets, demonstrating both robustness and effectiveness.
This demonstrates that our hybrid feature state extraction component combines semantic and statistical information about features, resulting in exceptional robustness.
Even when the semantic information is inaccurate, the statistical information remains valid, and the method can efficiently explore feature subspaces, maintaining stable performance.
In summary, our method combines semantic information and statistical features, resulting in its stable performance in the face of misleading metadata.

\subsection{Analysis of LLM-Generated Metadata Confidence}
\label{sec:generated_metadata_confidence}

This experiment aims to answer the question: \textit{
Whether LLM-generated metadata is reliable?}
We adopt the LLM credibility evaluation method SelfCheckGPT~\cite{selfcheckgpt} to evaluate the reliability of dataset metadata generated on three datasets with missing values (include two financial datasets German\_Credit and Credit\_default) .
SelfCheckGPT is an LLM confidence metric that asks a large language model to indicate whether a target sentence is supported by the provided samples, thereby providing a confidence score. 
It ranges from 0 to 100, and a lower score indicates a lower level of hallucination in generated content which means a higher level of confidence.
The experimental results are shown in Table \ref{table:selfcheckgpt_score}.
We found that the confidence levels of all three datasets were relatively high. 
This suggests that LLM was able to provide reasonable feature descriptions based on the available information by constructing detailed prompts using dataset metadata and feature names.
Therefore, due to our detailed and specific prompt, generated feature descriptions are credible and virtually free of hallucination issues.

\begin{table}[htbp]
\centering
\caption{SelfCheckGPT score in three datasets}
\label{table:selfcheckgpt_score}
\begin{tabular}{lccccc}
\toprule
\textbf{Dataset} &German\_Credit & Credit\_Default & SPECTF\\
\midrule
\textbf{SelfCheckGPT Score} & 4.16 & 12.31 & 6.81  \\
\bottomrule
\end{tabular}
\end{table}

\subsection{Analysis of Component-Wise Runtime}
\label{sec:component_wise_runtime}
This experiment aims to answer the question: \textit{
How does the time consumption of each component of HRLFS change with the growth of feature dimensions?}
We recorded the time consumption of each component across four datasets with different feature dimensions, shown in Figure~\ref{figure:component_time_consumption}.
We found that Exploration and Optimization and Metadata Embedding dominate the overall runtime of \model, and their proportion of time cost rises further as feature dimension increases.
This implies that GMM, as a statistical approach, can extract feature distributions efficiently and precisely without being sensitive to dimensional growth. And the pairwise merging strategy adopted for constructing hierarchical agents carries an $O(\log N)$ time complexity, enabling efficient execution even under high feature dimensions. Metadata embedding scales linearly with feature dimensions and requires LLM API invocations, which leads to rapid growth in its time overhead. Moreover, the Exploration and Optimization stage involves multi-agent hierarchical decision-making and reward Evaluation via downstream tasks, making it the primary time-consuming module. 
In summary, although Exploration and Optimization and Metadata Embedding account for most of the computational cost, they dominate the performance of \model. Analysis in Section~\ref{sec:space_time_complexity_comparison} also proves that our method retains significant runtime advantages over other baselines. 

\begin{figure}
    \centering
    \includegraphics[width=0.65\linewidth]{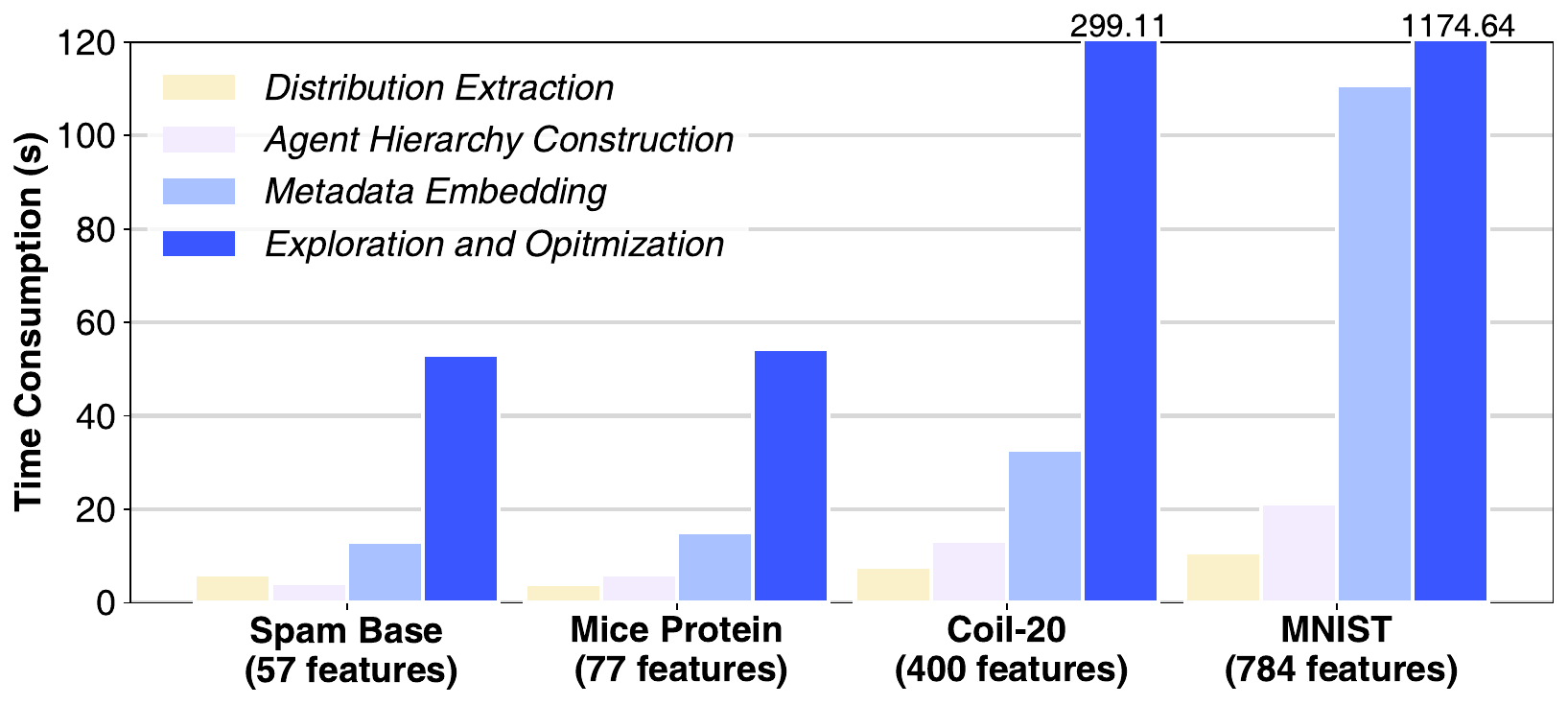}
    \caption{The impact of component time consumption }
    \label{figure:component_time_consumption}
\end{figure}

\subsection{Hyperparameter Study}
\label{hyper_exp}

\begin{figure}[!ht]
\centering 
\begin{subfigure}{0.5\textwidth}
\includegraphics[width=\textwidth]{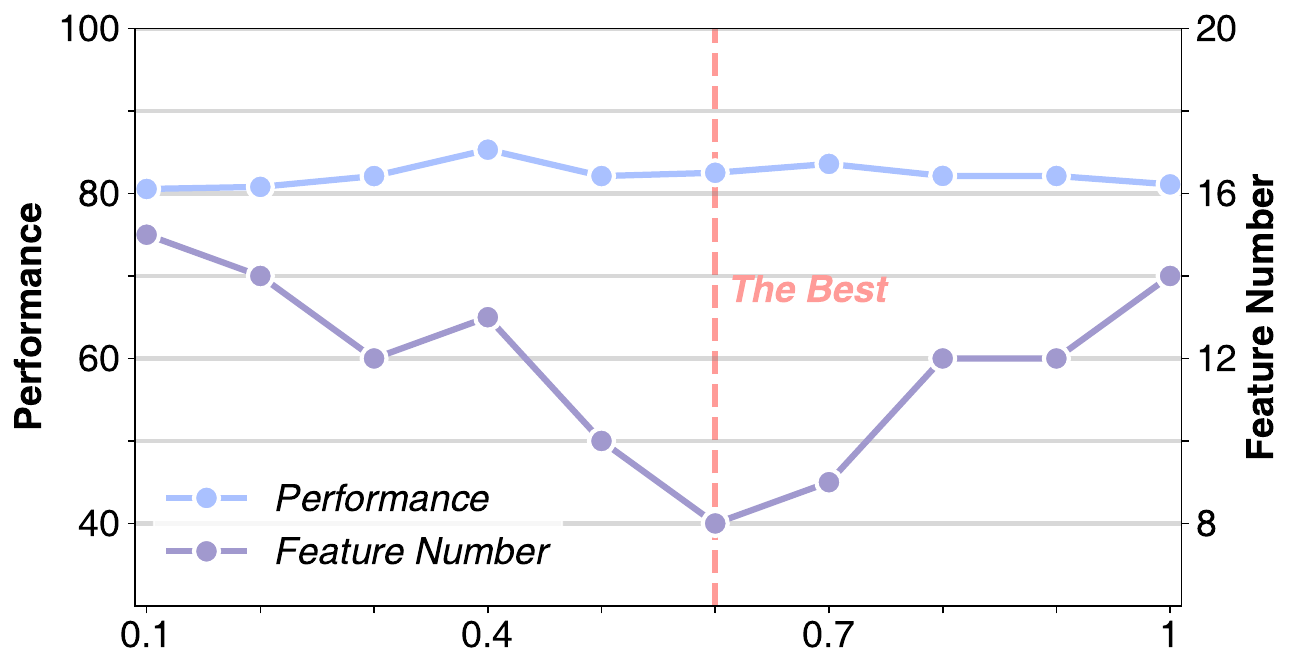}
\caption{$\alpha$ keeps 0.5, $\lambda$ varies from 0.1 to 1.}
\end{subfigure}
\begin{subfigure}{0.48\textwidth}
\includegraphics[width=\textwidth]{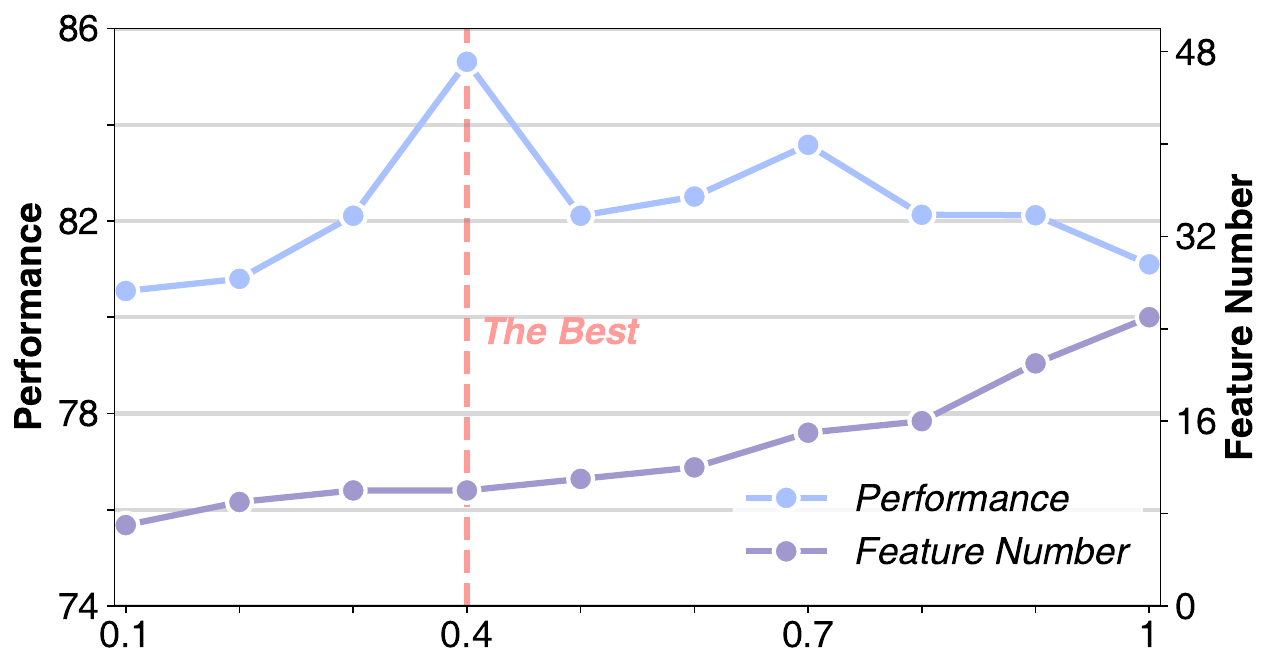}
\caption{$\lambda$ keeps 0.5, $\alpha$ varies from 0.1 to 1.}
\end{subfigure}
\caption{The impact of hyperparameter $\lambda$ and $\alpha$.}
\vspace{-0.4cm}
\label{fig:hyper}
\end{figure}

In this section, we analysis two essential hyperparameter of \model{}, i.e., $\lambda$ (Equation~\ref{equation:reward_lambda}) and $\alpha$ (Equation~\ref{equation:reward_alpha}). 
A higher $\lambda$ encourages the selection of fewer features, while an increase $\alpha$ prioritizes performance over feature compactness. 
We performed experiments adjusting $\lambda$ and $\alpha$ within the range of 0.1 to 1.0 on the SpectF dataset and report the results in Figure~\ref{fig:hyper}.
Analysis reveals that changes in $\lambda$ have minimal impact on model performance but influence the number of selected features, which initially decreases and then increases as $\lambda$ rises. 
This pattern indicates that higher $\lambda$ first suppresses selected feature size, but its further increases reduce fluctuations in $r^{q}_t$, allowing more features to be included. 
Regarding $\alpha$, increasing its value initially enhances model performance by emphasizing performance objectives. 
However, beyond a certain point, performance declines as excessive feature selection introduces redundancy. 
The number of selected features rises with $\alpha$, reflecting the decrease in suppression in the reward function.
These findings demonstrate that $\lambda$ and $\alpha$ are crucial in balancing feature selection and model performance.
Consequently, we selected $\lambda = 0.6$ and $\alpha = 0.4$ to balance performance and feature compactness. 
The experimental results confirm that $\lambda$ and $\alpha$ modulate the reward function in alignment with our objectives.

\subsection{Empirical Validation of the Hierarchical Assumption}\label{sec:empirical-hierarchy}

\begin{figure*}[!h]
\centering
\begin{subfigure}{0.48\textwidth}
\includegraphics[width=\textwidth]{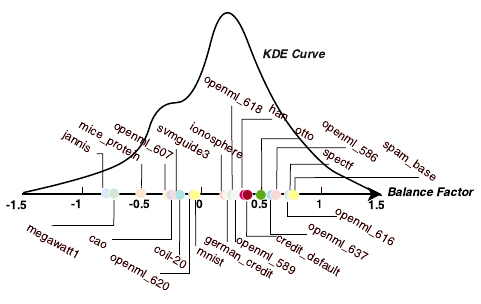}
\caption{Average Node Balance Factor.}
\end{subfigure}
\begin{subfigure}{0.40\textwidth}
\includegraphics[width=\textwidth]{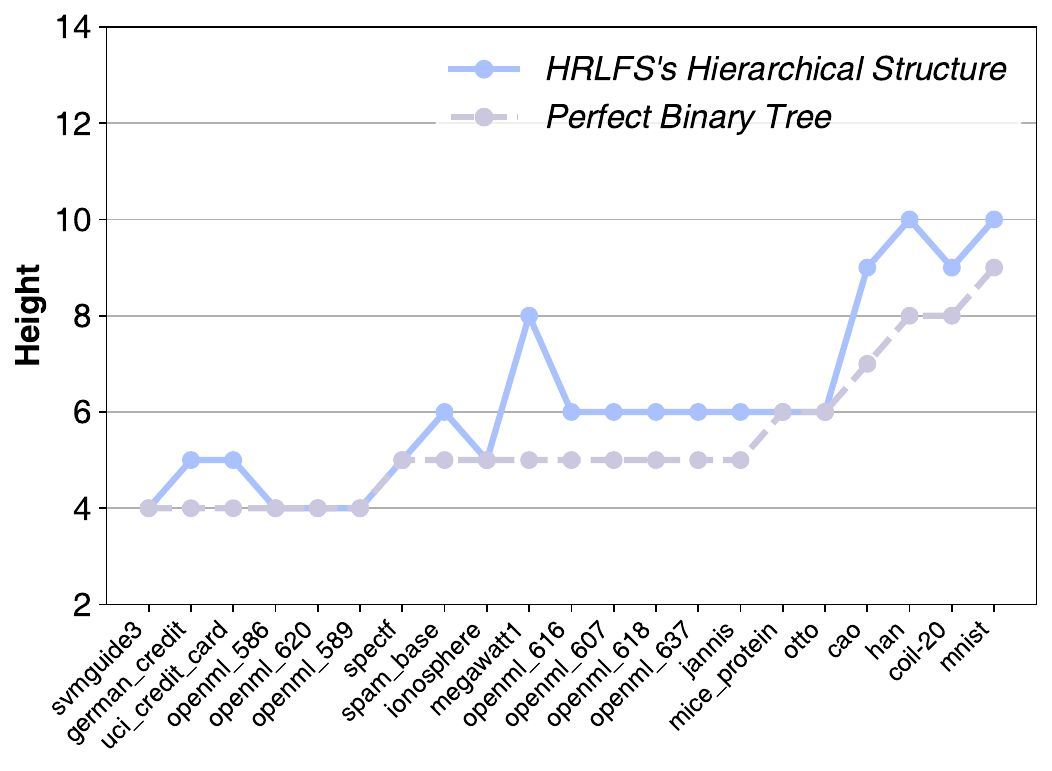}
\caption{Total Tree Height.}
\end{subfigure}
\caption{Empirical study of \model's hierarchical structure compared with the perfect binary tree.}
\label{figure:tree_balance_validate}
\end{figure*}

Our theoretical analysis is motivated by the observation that the constructed hierarchy tends to approximate a perfect binary tree. 
To assess this assumption empirically, we evaluated the balance factor and height of the learned hierarchies across all 21 datasets.
The average node balance factor is defined as the average difference in height between the left and right subtrees for all nodes~\cite{larsen1994avl}. 
Specifically, For an internal node $N$ with left/right subtree heights $h_L(N),h_R(N)$, define the node-wise balance factor
$\beta(N)=|h_L(N)-h_R(N)|$, and the average node balance factor can be defined as: 
$\bar\beta := \frac{1}{|\mathcal{N}_{\mathrm{int}}|}\sum{N\in\mathcal{N}_{\mathrm{int}}}\beta(N)$,
where $\mathcal{N}_{\mathrm{int}}$ is the set of internal nodes. 
In a perfect binary tree, this difference never exceeds 1, and the tree height is uniquely determined by the number of nodes.
As shown in Figure~\ref{figure:tree_balance_validate}, the absolute balance factor remains below 1 for all datasets, and over 80\% of them fall within 0.5. In addition, for 18 datasets, the height of our constructed hierarchy differs from that of the corresponding perfect binary tree by no more than 2 levels. This strong similarity arises because our Agent Hierarchy Construction algorithm clusters agent sets with comparable sizes at earlier stages, naturally promoting balanced splits.
In summary, both balance and height analyses confirm that the constructed hierarchies empirically resemble perfect binary trees, thereby supporting the validity of our assumption.

\section{Related Work}

\smallskip
\noindent\textbf{Feature Selection.}
Feature selection methods are broadly categorized into three approaches: filter-based, wrapper-based, and embedded-based methods~\cite{li2017feature}.
Filter-based methods~\cite{biesiada2008feature,yu2003feature} evaluate the statistical relationship between each feature and the prediction target or among features themselves. Features with measurements below a certain threshold are excluded.
These methods are computationally efficient but do not account for interactions between features, which can result in redundant or suboptimal feature subsets.
Wrapper-based methods\cite{huang2018feature,chen2017kernel,stein2005decision,gheyas2010feature} assess feature subsets by iteratively training and evaluating a specific machine learning model.
However, as the number of features grows, the search space becomes increasingly large, rendering these methods impractical for large-scale datasets~\cite{fan2020autofs,xiao2023traceable}.
Embedded methods\cite{dinh2020consistent,li2016deep,lemhadri2021lassonet} integrate feature selection directly into the model training process. 
This approach allows for the simultaneous optimization of feature relevance and model performance, facilitating the efficient identification of relevant features\cite{lu2018deeppink,koyama2022effective,kumagai2022few}.
However, embedded methods are often tied to specific types of models, limiting their flexibility and transferability to different downstream tasks.
Additionally, some embedded techniques can still be computationally demanding when applied to large and complex datasets.

\smallskip
\noindent\textbf{Reinforcement Learning.}
Reinforcement Learning is a machine learning paradigm in which an agent learns to make decisions by interacting with an environment based on its policy~\cite{sutton2018reinforcement}. RL has been successfully applied to various research areas~\cite{sarkar2023partially,openai2024gpt4technicalreport,zhang2024large,he2025fastft}. 
However, classical RL algorithms often struggle with long-range decision-making tasks due to limitations in scalability and efficiency~\cite{pateria2021hierarchical}.
To overcome these challenges, Hierarchical Reinforcement Learning (HRL) has been developed to decomposes the decision-making process into multiple levels of abstraction, allowing for more efficient learning and planning in complex tasks~\cite{hengst2011hierarchical,WangWYKP24}. 
Building on HRL, Multi-Agent Hierarchical Reinforcement Learning focuses on the coordination and collaboration among multiple HRL agents~\cite{pateria2021hierarchical,liu2016learning,ghavamzadeh2006hierarchical,SivagnanamPLMDL24,MaHLX24}. 
For example, Chakravorty et al.\cite{chakravorty2019option} investigated the learning of temporal abstractions in cooperative multi-agent systems, while Yang et al.\cite{yang2020hierarchical} demonstrated the integration of cooperative MARL algorithms for training high-level policies alongside single-agent RL for low-level skill acquisition. 
These advancements in MAHRL inspire our approach, where we employ a divide-and-conquer strategy within a multi-agent hierarchical reinforcement learning framework.

\smallskip
\noindent\textbf{RL-based Feature Selection.}
Reinforcement learning has been widely employed to recast wrapper-based feature selection as an objective-directed sequential decision problem, enabling iterative subspace exploration.
In the early stage, Liu et al.~\cite{sarlfs} proposed a single-agent-all-feature framework; however, it suffers from poor scalability and high time complexity ($\mathcal{O}(N)$ on high-dimensional data). 
Liu et al.~\cite{marlfs} introduced a multi-agent reinforcement learning (MARL) architectures that assign one agent per feature, but the resulting quadratic growth in agent count quickly becomes prohibitive. 
Group-based and interaction-wise agents\cite{fan2020autofs,fan2021autogfs} reduce the decision horizon by clustering features with simple statistical criteria, yet they ignore semantic and contextual relationships, leading to redundant or sub-optimal selections~\cite{li2024exploring,moraffah2024causal}.
Inspired by hierarchical RL’s success in long-horizon tasks~\cite{hengst2011hierarchical,WangWYKP24}, we adopt a divide-and-conquer multi-agent hierarchical RL framework that first semantically clusters features and then assigns cooperative hierarchical agents to each cluster, simultaneously improving scalability, efficiency, and downstream performance.

\section{Conclusion and Remark}
This paper studies the challenges of exploring feature subspaces from complex datasets.
Specifically, we reformulated the problem of feature selection as a multi-agent hierarchical reinforcement learning framework through the hybrid feature state extraction (comprehend), the hierarchical agent initialization (divide), and the exploration and optimization (conquer). 
We carried out comprehensive evaluations of \model{}, illustrating its efficient, superior, and robust performance in various data sets from different tasks and fields.

\section{Limitations and Discussions}
\model{} relies on metadata and feature names to generate feature descriptions, which makes it vulnerable to adversarial or maliciously crafted inputs that could corrupt the generated descriptions.
Moreover, the current design and evaluation are limited to tabular data. Extending the framework to high-dimensional modalities such as time series, images, or audio remains an open challenge.
Finally, the present formulation focuses on standard supervised learning tasks (classification and regression). Adapting \model{} to more diverse downstream objectives—including survival analysis, anomaly detection, and causal inference—represents a promising direction for future research.

To facilitate the better use of \model{}, we have prepared industrialization notes on the following three aspects.
(1) Cold-start for sparse metadata.
For sparse metadata (e.g., unannotated synthetic datasets), \model{} sets the semantic vector to zero and falls back to distribution-only states ($\theta_i^*$, optimized GMM parameters) when semantic embeddings $e_i$ are unavailable. This preserves core clustering and agent decision logic, maintaining feature selection performance via mathematical feature characteristics alone.
(2) Monitoring selected-feature drift.
We track two drift metrics for \model{} feature clusters: (i) statistical drift of GMM parameters $\theta_i^*$ ($\mu$, $\sigma$), (ii) task performance drift on holdout data. Drift triggers initiate re-exploration of the feature subspace to retain optimality for non-stationary distributions.
(3) Cost envelope for LLM embedding at scale.
For large-scale datasets, LLM embedding costs are controlled via lightweight open-source models (e.g., Qwen3-Embedding-0.6B~\cite{qwen3embedding}) for on-premises deployment. For $>10^5$ features, feature sampling for embedding balances accuracy and computational cost.
(4) Retraining cadence vs feature churn.
\model{} retraining cadence aligns with feature churn rate: (i) $<5\%$ churn: lightweight policy retraining; (ii) $5\%-20\%$: full feature re-clustering + policy fine-tuning; (iii) $>20\%$: complete \model{} retraining. Static datasets only require retraining for task objective/metric changes.

\begin{acks}
This work is partially supported by the National Natural Science Foundation of China (No.62506351, 92470204), the Strategic Priority Research Program of the Chinese Academy of Sciences (No.XDB1350102), and the Beijing Natural Science Foundation (No.4254089). 
\end{acks}

\balance
\normalem

\bibliography{ref}
\bibliographystyle{IEEEtran}

\end{document}